\documentclass{article}

\usepackage{microtype}
\usepackage{graphicx}
\usepackage{subcaption}
\usepackage{booktabs} 

\usepackage{hyperref}

\usepackage[preprint]{icml2026}

\usepackage{amsmath}
\usepackage{amssymb}
\usepackage{mathtools}
\usepackage{amsthm}
\usepackage{bbm}
\usepackage{multirow}

\usepackage[capitalize,noabbrev]{cleveref}

\theoremstyle{plain}
\newtheorem{theorem}{Theorem}[section]
\newtheorem{proposition}[theorem]{Proposition}

\theoremstyle{definition}

\theoremstyle{remark}

\usepackage[textsize=tiny]{todonotes}

\icmltitlerunning{Adaptive Uncertainty-Aware Tree Search for Robust Reasoning}

\begin{document}

\twocolumn[
  \icmltitle{Adaptive Uncertainty-Aware Tree Search for Robust Reasoning}



  \icmlsetsymbol{equal}{*}

  \begin{icmlauthorlist}
    \icmlauthor{Zeen Song}{iscas,ucas}
    \icmlauthor{Zihao Ma}{CSU}
    \icmlauthor{Wenwen Qiang}{iscas}
    \icmlauthor{Changwen Zheng}{iscas}
    \icmlauthor{Gang Hua}{dolby,XJ}
  \end{icmlauthorlist}

  \icmlaffiliation{iscas}{Institute of Software Chinese Academy of Sciences, Beijing, China}
  \icmlaffiliation{ucas}{University of the Chinese Academy of Sciences}
  \icmlaffiliation{dolby}{Multimodal Experiences Lab, Dolby Laboratories Inc}
  \icmlaffiliation{XJ}{Institute of Artificial Intelligence and Robotics, Xi’an Jiaotong University}
  \icmlaffiliation{CSU}{Electronic Information Science and Technology, Central South University}

  \icmlcorrespondingauthor{Wenwen Qiang}{qiangwenwen@iscas.ac.cn}

  \icmlkeywords{Machine Learning, ICML}

  \vskip 0.3in
]



\printAffiliationsAndNotice{}  

\begin{abstract}
  Inference-time reasoning scaling has significantly advanced the capabilities of Large Language Models (LLMs) in complex problem-solving. A prevalent approach involves external search guided by Process Reward Models (PRMs). However, a fundamental limitation of this framework is the epistemic uncertainty of PRMs when evaluating reasoning paths that deviate from their training distribution. In this work, we conduct a systematic analysis of this challenge. We first provide empirical evidence that PRMs exhibit high uncertainty and unreliable scoring on out-of-distribution (OOD) samples. We then establish a theoretical framework proving that while standard search incurs linear regret accumulation, an uncertainty-aware strategy can achieve sublinear regret. Motivated by these findings, we propose Uncertainty-Aware Tree Search (UATS), a unified method that estimates uncertainty via Monte Carlo Dropout and dynamically allocates compute budget using a reinforcement learning-based controller. Extensive experiments demonstrate that our approach effectively mitigates the impact of OOD errors.
\end{abstract}

\section{Introduction}
\begin{figure}[t]
    \centering
    \includegraphics[width=\linewidth]{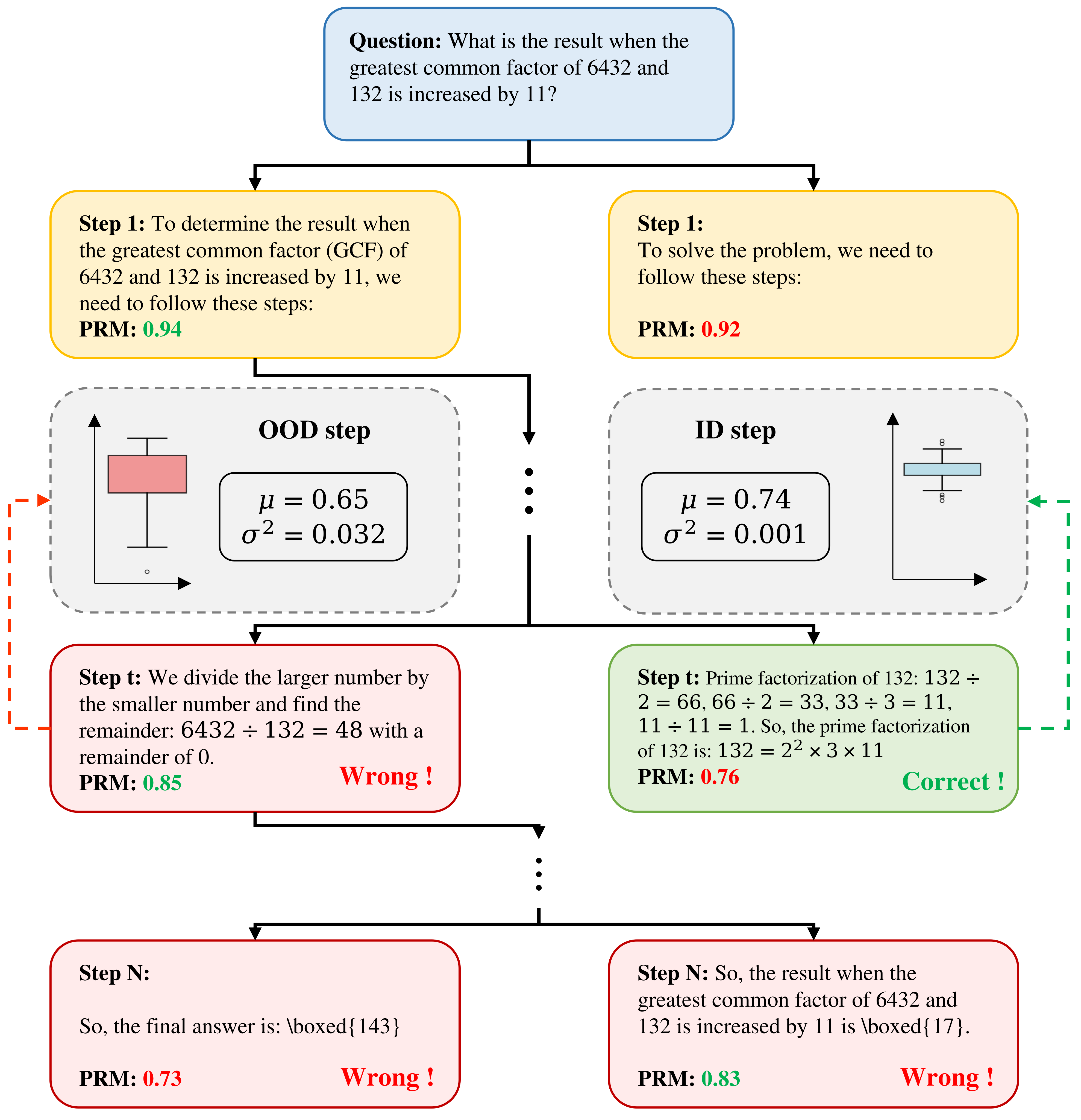}
    \caption{Visualization of a real beam search step on the MATH dataset. The diagram depicts a selection conflict where the PRM assigns a higher reward ($0.85$) to an incorrect reasoning step (left) containing a calculation error than to a correct step ($0.76$, right). Despite the high reward, the epistemic uncertainty (visualized via boxplots) reveals the unreliability of the incorrect node, which exhibits significantly higher score variance ($\sigma^2=0.032$) compared to the correct node ($\sigma^2=0.001$).}
    \label{fig:moti}
\end{figure}

In recent years, chain-of-thought (CoT) prompting has substantially improved the performance of large language models (LLMs) on complex reasoning tasks such as math problem solving, question answering, and multi-hop retrieval \cite{weiChainofthoughtPromptingElicits2022, kojimaLargeLanguageModels2022, yaoTreeThoughtsDeliberate2023}. To support more effective CoT reasoning, recent works have explored inference-time reasoning strategies that allocate more compute during inference \cite{openai2024learning, deepseek-aiDeepSeekR1IncentivizingReasoning2025, teamKimiK15Scaling2025,snellScalingLLMTestTime2024a, liuCan1BLLM2025, wuInferenceScalingLaws2024}. These approaches can be broadly divided into internal and external ones, with external methods attracting increasing attention due to their flexibility and ability to enhance performance without modifying the base model \cite{snellScalingLLMTestTime2024a, liuCan1BLLM2025, wuInferenceScalingLaws2024}.

The external reasoning framework typically consists of three components: a frozen policy model, a search algorithm, and a process reward model (PRM). It generates multiple candidate reasoning paths using the policy model guided by the search algorithm, and reranks these paths using the PRM to select the most promising one \cite{yaoTreeThoughtsDeliberate2023, bestaGraphThoughtsSolving2024}. However, a critical vulnerability in this framework lies in the PRM's capability. Since PRMs are typically trained on a fixed distribution of reasoning traces, they often exhibit high epistemic uncertainty when evaluating reasoning steps that deviate from their training data \cite{kotelevskiiRiskUncertaintyGenerating2024}. Neglecting such samples during search causes the PRM to yield overconfident errors, leading to the rejection of valid paths.

To systematically address this challenge, we perform both empirical and theoretical analyses. Empirically, we evaluate open-source PRMs (Math-Shepherd-PRM-7B and Qwen2.5-Math-PRM-7B) on reasoning traces generated by diverse policy models ( LLemma-7B and Qwen-2.5-Instruct-7B ) using the MATH500 dataset. By measuring search accuracy and Monte Carlo Dropout score variance, our experiments reveal two findings: (i) PRMs suffer from significant accuracy drops when scoring OOD traces compared to in-distribution ones, and (ii) this degradation is strongly correlated with spikes in epistemic uncertainty. We also provide a real example in Figure \ref{fig:moti}. Theoretically, we model the reasoning search under uncertainty and derive regret bounds for different strategies. We prove that traditional greedy approaches incur linear regret accumulation in the presence of OOD samples. Conversely, we demonstrate that incorporating uncertainty estimates into the selection mechanism reduces this to sublinear regret, providing a rigorous foundation for uncertainty-aware search algorithms.

Motivated by these theoretical insights, we propose a novel Uncertainty-Aware Tree Search framework designed to bridge the gap between theory and practice. Our method first estimates epistemic uncertainty via Monte Carlo Dropout. We then introduce a heuristic search strategy that leverages this uncertainty to selectively re-evaluate promising but uncertain candidates. To further enhance adaptability, we formalize the search budget allocation as a Markov Decision Process (MDP) and train an adaptive controller using reinforcement learning. This controller dynamically modulates key search hyperparameters to optimally balance exploration and exploitation under resource constraints.

To validate our theoretical analysis and the effectiveness of the proposed UATS strategy, we conduct extensive experiments on two challenging mathematical reasoning benchmarks: MATH-500 \cite{hendrycksmath2021, lightmanLetsVerifyStep2023} and AIME24 \cite{ai-mo2024}. We evaluate UATS under multiple frozen policy models, including Qwen 2.5 \cite{yang2024qwen2}, Llama 3.1 \cite{grattafiori2024llama}, and Llama 3.2 \cite{meta2024llama3}, and incorporate a diverse set of PRMs \cite{zhangLessonsDevelopingProcess2025, wangMathShepherdVerifyReinforce2024, skywork2024}. The results demonstrate that UATS consistently achieves higher accuracy than other external reasoning methods across different model combinations and compute budgets. These results provide strong empirical support for our theoretical analysis. Our contributions are summarized as follows:
\begin{itemize}
    \item We identify the critical issue of epistemic uncertainty in PRM-based external reasoning, revealing that distribution shifts between the policy and PRM significantly undermine search reliability.
    \item We provide a rigorous analysis combining empirical evidence of PRM degradation on OOD data with theoretical proofs demonstrating that uncertainty-aware search provably minimizes search regret compared to uncertainty-agnostic baselines.
    \item We propose the Uncertainty-Aware Tree Search (UATS), a method that integrates uncertainty estimation with a learnable budget controller. Extensive experiments demonstrate that our approach significantly outperforms standard baselines, achieving superior accuracy and compute efficiency.
\end{itemize}

\section{Related Works}

\textbf{Inference-time Reasoning}. CoT prompting is first proposed as a prompting technique that enables LLMs to decompose problems into intermediate steps \cite{weiChainofthoughtPromptingElicits2022}. Recently, the OpenAI o1 series \cite{openai2024learning} demonstrate that increasing the length of CoT during inference yields substantial performance gains on tasks like MATH \cite{hendrycksmath2021} and AIME \cite{ai-mo2024}. 

External reasoning improves the reasoning performance via sampling or search-based methods with fixed LLMs and an external verifier \cite{lightmanLetsVerifyStep2023,wuInferenceScalingLaws2024, snellScalingLLMTestTime2024a, yaoTreeThoughtsDeliberate2023, selAlgorithmThoughtsEnhancing2024, bestaGraphThoughtsSolving2024, zhangAutomaticChainThought2022, brownLargeLanguageMonkeys2024}. Specifically, Tree‑of‑Thoughts \cite{yaoTreeThoughtsDeliberate2023} explores a look‑ahead search tree of thought chunks with self‑evaluation to achieve large gains on planning‑style tasks. 
Snell et al. \cite{snellScalingLLMTestTime2024a} find that adaptive allocation of verifier‑guided search can beat a 14× larger model while using 4× less extra compute than best‑of‑N sampling.

\textbf{Process Reward Model}.
An essential component of external reasoning is the verifier that evaluates different reasoning paths. Verifiers are categorized into two types: Process Reward Models (PRMs) and Outcome Reward Models (ORMs) \cite{uesato2022solving}. PRMs assess the quality of a reasoning step given the question and partial reasoning trajectory, estimating the likelihood that the process will lead to a correct answer \cite{lightmanLetsVerifyStep2023, wangMathShepherdVerifyReinforce2024}. In contrast, ORMs provide a reward signal based on the final answer's correctness, given the full reasoning trace and output \cite{uesato2022solving,lightmanLetsVerifyStep2023}. Recent studies have shown that PRMs are generally more effective than ORMs in guiding search \cite{lightmanLetsVerifyStep2023, uesato2022solving, snellScalingLLMTestTime2024a}, and PRMs have become a widely adopted tool in external test-time reasoning frameworks \cite{liuCan1BLLM2025, xieSelfEvaluationGuidedBeam2023,snellScalingLLMTestTime2024a}. \cite{lightmanLetsVerifyStep2023} trains PRM on 800k human‑labeled reasoning steps. Math-Shepherd \cite{wangMathShepherdVerifyReinforce2024} automatically constructs step-level supervision and assigns scores based on the proportion of branches that reach the known correct answer.

\section{Preliminary}
In this section, we first formally define the fundamental workflow of problem-solving via external reasoning. Subsequently, we briefly introduce the training paradigm of PRMs. Finally, we analyze the uncertainty of PRM outputs from a Bayesian perspective.

\subsection{Problem Solving with External Reasoning}
We consider reasoning problems defined in natural language with fixed answers, where each question $q \in \mathcal{Q}$ is associated with a ground truth answer $a^* \in \mathcal{A}$. These elements are represented as sequences of tokens; specifically, $\mathcal{Q}, \mathcal{A} \subset \mathcal{V}^*$, where $\mathcal{V}$ denotes the token space and $\mathcal{V}^* \triangleq \bigcup_{L=0}^{\infty} \mathcal{V}^L$ represents the set of all possible token sequences. We assume that the question-answer pairs $(q, a^*)$ follow an unknown joint distribution $\mathcal{P}_{\text{data}}(q, a^*)$.

In general, these problems are difficult to answer directly and instead require a step-by-step reasoning process \cite{weiChainofthoughtPromptingElicits2022,lightmanLetsVerifyStep2023,openai2024learning}. Under this situation, an external reasoning system is designed to address this challenge. Specifically, the system consists of the following components: (i) an LLM $\pi_\theta:\mathcal{V}^*\to\Delta(\mathcal{V})$ with fixed parameters $\theta$ is used as the proposer to generate partial reasoning traces $h_k=(q,z_1,\dots,z_k)$ in an autoregressive manner, where each $z_k\in\mathcal{V}^*$ is a reasoning step and $\Delta(\mathcal{V})$ denotes the probability simplex over $\mathcal{V}$; (ii) a PRM $R_\phi:\mathcal{V}^*\to[0,1]$ with fixed parameters $\phi$ is used as the verifier to evaluate the reasoning traces $h_k$ generated by $\pi_\theta$; (iii) a search algorithm $\mathrm{Alg}(\cdot;\pi_\theta,R_\phi,C):\mathcal{Q}\to\mathcal{V}^*$ leverages the traces proposed by $\pi_\theta$, the scores provided by $R_\phi$, and a computation budget $C\in\mathbb{N}^+$ to guide, filter, and ultimately output a selected reasoning trace $\hat{h}$; and (iv) an answer extractor $\mathcal{E}:\mathcal{V}^*\to\mathcal{A}$ extracts the final answer from the selected reasoning trace, namely $\hat{a}=\mathcal{E}(\hat{h})$.

In this context, existing research primarily focuses on designing search algorithms that maximize the accuracy of the final answer \cite{snellScalingLLMTestTime2024, wuInferenceScalingLaws2024}. Formally, given a dataset $\mathcal{D}_{\mathrm{tr}} = \{(q_i, a^*_i)\}_{i=1}^{N_{\text{train}}}$ where the samples are drawn from the distribution $\mathcal{P}_{\text{data}}(q, a^*)$, the goal is to find a search algorithm that maximizes the expected accuracy. Consequently, the objective function is defined as:
\begin{equation}
\label{eq:obj}
\max_{\mathrm{Alg}} \mathbb{E}_{(q, a^*) \sim \mathcal{P}_{\text{data}}} \left[ \mathbbm{1}_{\left( \mathcal{E}\big(\mathrm{Alg}(q; \pi_\theta, R_\phi, C)\big) = a^* \right)} \right],
\end{equation}
where the mathematical term $\mathbbm{1}_{(\cdot)}$ denotes the indicator function that equals $1$ if the extracted answer matches the ground-truth answer $a^*$, and $0$ otherwise.

\subsection{PRM: Process Reward Model}
\label{sec:prm}
The PRM \(R_\phi\) plays a central role in the external reasoning system. A PRM is typically implemented by appending a linear prediction head to another LLM (different from $\pi_\theta$) and then fine-tuning the entire network on supervised training data \cite{uesato2022solving, lightmanLetsVerifyStep2023, zhangLessonsDevelopingProcess2025,wangMathShepherdVerifyReinforce2024}. The dataset $\mathcal{D}_{\mathrm{PRM}} = \{(q_i, \{(z_{i,t}, y_{h_{i,t}})\}_{t=1}^{T_i})\}_{i=1}^{n}$ for training PRM consists of $n$ questions and corresponding reasoning steps $\{z_{i,t}\}_{t=1}^{T_i}$, where \(q_i\) denotes the \(i\)-th question, \(z_{i,t}\) represents the $t$-th reasoning step for question \(q_i\), $h_{i,t}=(q_i,z_{i,1},\dots,z_{i,t})$ is the reasoning trace and \(y_{h_{i,t}} \in \{0,1\}\) is the quality label for \(h_{i,t}\), with $1$ indicating ``good" and $0$ indicating ``bad". The label collection process can be found in \cite{lightmanLetsVerifyStep2023, wangMathShepherdVerifyReinforce2024}. Given the above training data, the PRM training objective is formulated as:
\begin{equation}
\label{eq:prm}
\begin{aligned}
\min_\phi \quad
& \sum_{i=1}^N \sum_{t=1}^{T_i}
\Big(
y_{h_{i,t}} \log R_\phi(h_{i,t}) \\
& \quad + (1 - y_{h_{i,t}})
\log \big(1 - R_\phi(h_{i,t})\big)
\Big).
\end{aligned}
\end{equation}
After training, the parameters of PRM are frozen. Within external reasoning, the trained PRM is expected to assign reliable scores to the reasoning steps produced by the frozen policy $\pi_\theta$ when confronted with unseen questions. 

\subsection{Prediction Uncertainty in PRMs}
\label{sec:motivation}
Intuitively, the score of reasoning steps produced by the PRM directly influences the selection of reasoning traces, thereby affecting the performance of external reasoning systems under Equation \ref{eq:obj}. As detailed in Section \ref{sec:prm}, the PRM is acquired through supervised fine-tuning on a finite set of training samples. From a Bayesian perspective, the parameters of PRM after minimizing Equation~\ref{eq:prm} follow a posterior distribution $p(\phi\mid \mathcal{D}_{\mathrm{PRM}})$. Consequently, the predictive uncertainty of the PRM can be characterized using the law of total variance:
\begin{equation}
\label{eq:uncertainty_decomp}
\resizebox{0.9\linewidth}{!}{$
\begin{aligned}
\underbrace{\mathrm{Var}(y \mid h, \mathcal{D}_{\mathrm{PRM}})}_{\text{Total Uncertainty}}
&=
\underbrace{
\mathbb{E}_{\phi \sim p(\phi \mid \mathcal{D}_{\mathrm{PRM}})}
\big[
\mathrm{Var}(y \mid h, \phi)
\big]
}_{\text{Aleatoric Uncertainty}} \\
&\quad+
\underbrace{
\mathrm{Var}_{\phi \sim p(\phi \mid \mathcal{D}_{\mathrm{PRM}})}
\big(
\mathbb{E}[y \mid h, \phi]
\big)
}_{\text{Epistemic Uncertainty}} .
\end{aligned}
$}
\end{equation}
The first term corresponds to the aleatoric uncertainty, which captures the intrinsic randomness of the outcome $y$. In the context of PRM, such uncertainty arises from the fact that a given $h$ may be extendable to multiple plausible future reasoning steps and final answers with different outcomes. Importantly, this source of uncertainty is independent of how well the PRM has been trained. 

The second term captures the epistemic uncertainty, which measures the variability in predictions caused by uncertainty over the model parameters. When a reasoning trace $h$ is highly similar to samples in $\mathcal{D}_{\mathrm{PRM}}$, all parameter configurations $\phi$ that achieve a good fit to Equation~\ref{eq:prm} are effectively constrained to produce consistent predictions, resulting in a small variance. In contrast, when $h$ lies far from the support of $\mathcal{D}_{\mathrm{PRM}}$, the training objective in Equation~\ref{eq:prm} provides little constraint on the behavior of the PRM at $h$, allowing different plausible parameter values to yield different predictions. Consequently, epistemic uncertainty quantifies how far a given reasoning trace deviates from the PRM training data \cite{kochenderfer2022algorithms}. When evaluating reasoning traces that deviate significantly from $\mathcal{D}_{\mathrm{PRM}}$, the PRM may exhibit high epistemic uncertainty.

Driven by the above analysis, we aim to answer the following questions: (i) How do existing PRMs perform when encountering reasoning traces that deviate from their training data? Can they still provide accurate assessments? (ii) If the PRM exhibits high uncertainty, what impact does this have on the accuracy of the external reasoning system?

\section{Problem Analysis}
\label{sec:theory}
In this section, we conduct a systematic analysis to answer the questions raised above. Specifically, we present empirical results revealing that current PRMs exhibit high uncertainty and suffer from severe performance degradation when evaluating out-of-distribution (OOD) reasoning steps. Subsequently, we establish a theoretical framework and prove that search algorithms ignoring this uncertainty incur unavoidable search regret. Furthermore, we demonstrate that an uncertainty-aware search process can provably minimize this regret, thereby providing a theoretical basis for subsequent algorithm design.

\subsection{Empirical Analysis}
To verify our hypothesis regarding PRM behavior under distribution shifts, we investigate the performance and prediction uncertainty of different PRMs when evaluating reasoning traces generated by various policy models.

We select two representative open-source PRMs: Math-Shepherd-PRM-7B \cite{wangMathShepherdVerifyReinforce2024} and Qwen2.5-Math-PRM-7B \cite{yangQwen25MathTechnicalReport2024}. For the policy models, we employ LLemma-7B and Qwen-2.5-Instruct-7B, creating combinations of PRMs and policies that vary in their underlying data distributions. The evaluation is conducted on a subset of 50 randomly sampled questions from the MATH500 dataset. We employ Beam width as the search strategy, configuring the beam width as 4 and number of final candidate answers as 8. We record two primary metrics: (1) \textbf{Accuracy}, averaged over 10 independent runs for each PRM-Policy pair to ensure statistical stability; and (2) \textbf{Score Variance}, which serves as a proxy for epistemic uncertainty, calculated by measuring the variance of PRM scores for sampled reasoning traces via Monte-Carlo Dropout.

\begin{figure}[tb]
    \centering
    \includegraphics[width=\linewidth]{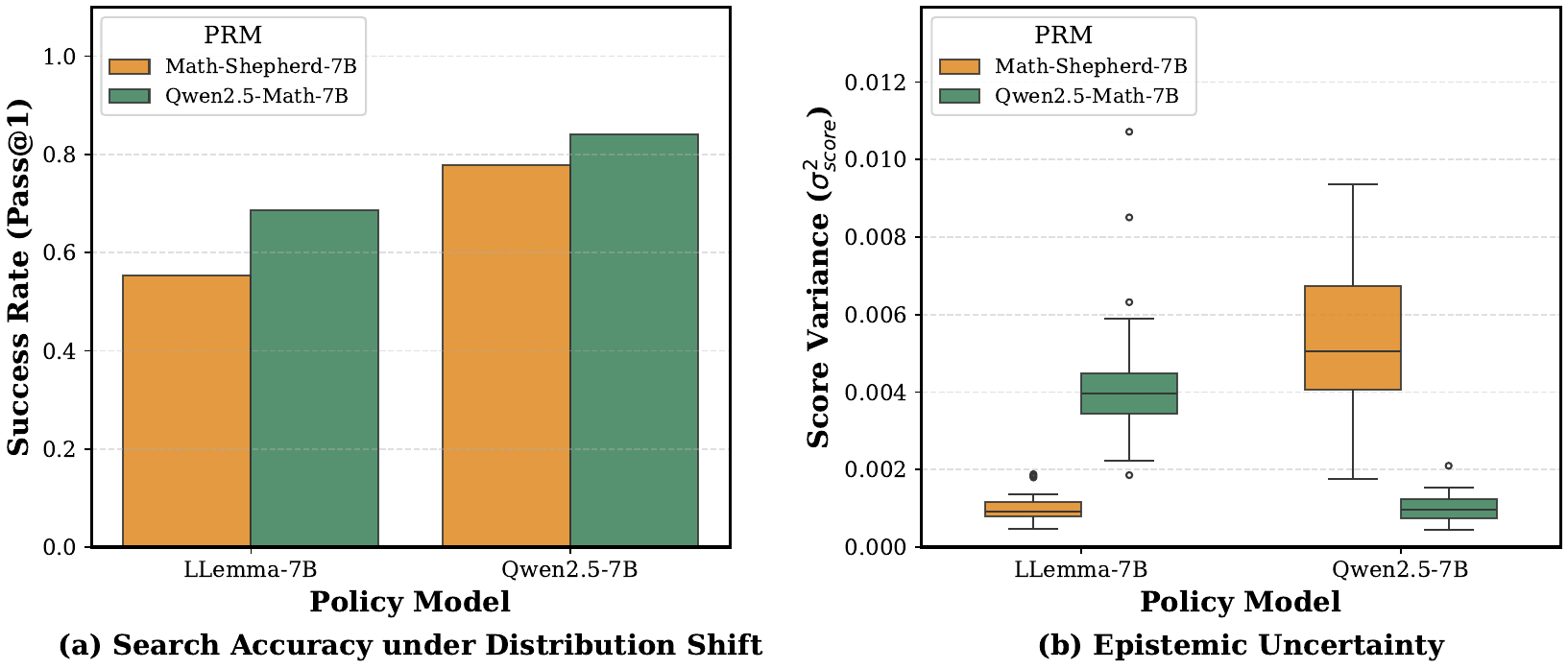}
    \caption{(a) Search accuracy across different PRM-Policy pairs. (b) The distribution of score variance across Monte Carlo Dropout for different PRM-Policy pairs}
    \label{fig:empirical}
\end{figure}

As shown in Figure \ref{fig:empirical}, PRMs exhibit varying robustness on fixed policies. Crucially, they demonstrate significantly higher score variance when evaluating traces from distributionally distinct policy models. Figure \ref{fig:moti} illustrates a real-world failure on the MATH dataset using Qwen-2.5-7B-Instruct and Math-Shepherd-PRM-7B. The PRM confidently misranks an incorrect OOD step ($R_\phi=0.85$) above a correct ID step ($R_\phi=0.76$), leading to search error. Epistemic uncertainty exposes this risk: the OOD step exhibits significantly higher variance ($\sigma^2=0.032$) than the robust ID step ($\sigma^2=0.001$). We provide more cases of unreliable scores in Appendix \ref{app:quali} in Figure \ref{fig:high_wrong} and \ref{fig:low_right}.

\subsection{Theoretical Analysis}
In this section, we provide a unified theoretical analysis that connects the uncertainty in PRMs with decision-making strategies for external reasoning. Then, we discuss how these theoretical observations inform the modeling of external reasoning methods.

\textbf{Influence of Uncertainty} 
In external reasoning, the objective is to iteratively select reasoning prefixes that maximize the true reward, thereby improving final answer correctness. To study the impact of uncertainty on this process, we consider a simplified setting.

At each step \(t\in\{1,\dots,T\}\), the proposer generates a set of \(M\) candidate reasoning traces \(\mathcal{H}_t=\{h_{t,1},\dots,h_{t,M}\}\). A pretrained PRM \(R_\phi(\cdot)\) with parameters \(\phi\sim p(\phi\mid\mathcal{D}_{\mathrm{PRM}})\) is used to score candidates, while the true reward function \(R^*(\cdot)\) is unobserved. We model the PRM score as \(R_\phi(h)=R^*(h)+\xi(h)\), where \(\xi(h)\) captures epistemic uncertainty.

At each step, the algorithm selects \(\hat h_t = \arg\max_{h\in\mathcal{H}_t} R_\phi(h)\) and continues generation from the selected prefix. This procedure is repeated for \(T\) steps. We consider the practical regime in which the point estimate \(R_\phi\) is directly used for ranking. Let \(\mathcal{O}_t\) denote the event that \(\mathcal{H}_t\) contains at least one OOD trace, and assume \(\mathbb{P}(\mathcal{O}_t)=\varepsilon\) for all \(t\). We now state the main result under this setting:
\begin{proposition}[Linear degradation under epistemic uncertainty]
\label{prop:linear_degrade_xi_compact}
Consider the above scenario. Suppose (i) $\mathbb{P}(\mathcal{O}_t)=\varepsilon$ for all $t$; 
(ii) under a fixed continuation policy, if $\hat h_t\neq h_t^*$ then the final success probability decreases by at least $\underline{\Delta}$; 
and (iii) there exists a lower bound $\rho\in(0,1]$ such that $\mathbb{P}(\hat h_t\neq h_t^* \mid \mathcal{O}_t)\ \ge\ \rho,\
\mathbb{P}(\hat h_t\neq h_t^* \mid \mathcal{O}_t^c)=0$. Then
\begin{equation}
\mathbb{E}[\mathrm{Acc}_T]
\ \le\
R^*(h_0)\;-\;T\,\varepsilon\,\underline{\Delta}\,\rho,
\label{eq:linear_bound_rho}
\end{equation}
where $\mathrm{Acc}_T$ is defined as $\mathrm{Acc}_T \triangleq R^*(\hat h_T)$.
\end{proposition}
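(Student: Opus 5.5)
The plan is to read $R^*(h)$ as the success probability of continuing from prefix $h$ under the fixed continuation policy in (ii), with $R^*(h_0)$ the success probability of the ideal run that always selects $h_t^*$. I will then telescope the expected accuracy across the $T$ steps. Set $V_0 \triangleq R^*(h_0)$ and $V_t \triangleq \mathbb{E}[R^*(\hat h_t)]$, so that $\mathbb{E}[\mathrm{Acc}_T]=V_T$. Introduce the error indicators $E_t\triangleq \mathbbm{1}_{(\hat h_t\neq h_t^*)}$.

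The first step is a one-step inequality. Assumption (ii) says a wrong selection at step $t$ costs at least $\underline{\Delta}$ in final success probability relative to the correct choice. I will take the correct choice to be non-harmful, meaning that choosing $h_t^*$ does not lower the value; this is the natural reading of $h_t^*$ as the optimal candidate. Under that reading,
\begin{equation*}
R^*(\hat h_t)\ \le\ R^*(\hat h_{t-1}) - \underline{\Delta}\,E_t
\end{equation*}
holds pathwise. Taking expectations and summing from $t=1$ to $T$ telescopes to
\begin{equation*}
V_T\le R^*(h_0)-\underline{\Delta}\sum_{t=1}^T\mathbb{P}(E_t=1).
\end{equation*}

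The second step bounds each error probability from below using (i) and (iii) and the law of total probability:
\begin{equation*}
\mathbb{P}(E_t=1)=\mathbb{P}(E_t=1\mid\mathcal{O}_t)\,\varepsilon+\mathbb{P}(E_t=1\mid\mathcal{O}_t^c)(1-\varepsilon)\ \ge\ \rho\varepsilon.
\end{equation*}
Substituting this into the telescoped sum gives the bound \eqref{eq:linear_bound_rho}.

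The main obstacle is making the one-step inequality rigorous. Assumption (ii) is stated as a decrease "under a fixed continuation policy". This has to be interpreted so that the decrements incurred at different steps add up, rather than being measured against different baselines after an earlier error has already occurred. I would formalize it as a conditional statement: given the history $\mathcal{F}_{t-1}$, the value after step $t$ is at most the value before it minus $\underline{\Delta}E_t$. I would then apply the tower property, so that the conditioning in (iii) on $\mathcal{O}_t$ combines with it through $\mathbb{E}[E_t]$. If needed, I would note that $\mathcal{O}_t$ may depend on the history; (iii) only requires the conditional bound, so the total-probability step still goes through. The bound may become vacuous (negative) for large $T$, which is consistent with the claim and just reflects linear accumulation.
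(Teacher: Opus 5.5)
Your route matches the paper's: it also lower-bounds $\mathbb{P}(E_t)\ge\varepsilon\rho$ by total probability, bounds the expected per-step loss by $\underline{\Delta}\,\mathbb{P}(E_t)$, and writes $\mathrm{Acc}_T$ as $R^*(h_0)$ minus the cumulative per-step losses, which is your telescoping of $V_t$. The problem is how you justify the one-step inequality, because your auxiliary assumption points the wrong way. From (ii), read relative to the correct choice, you get $R^*(\hat h_t)\le R^*(h_t^*)-\underline{\Delta}E_t$. To reach $R^*(\hat h_t)\le R^*(\hat h_{t-1})-\underline{\Delta}E_t$ you need $R^*(h_t^*)\le R^*(\hat h_{t-1})$, i.e.\ a correct selection does not \emph{raise} the value. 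Your stated assumption, that choosing $h_t^*$ ``does not lower the value'', says $R^*(h_t^*)\ge R^*(\hat h_{t-1})$. That is a lower bound, so it cannot be chained with the upper bound from (ii). Combined with the inequality you want, it would force $R^*(h_t^*)=R^*(\hat h_{t-1})$ whenever $E_t=0$. The condition actually needed is the one the paper assumes implicitly when it says the reduction $L_t$ is non-negative when no error occurs.

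There is a second issue. Under your own reading of $R^*$ as the success probability of the ideal continuation that always selects $h_t^*$, even the corrected inequality fails pathwise. $R^*(\hat h_{t-1})$ is the average of $R^*(h_t^*)$ over the random candidate set $\mathcal{H}_t$, so a favourable draw gives $R^*(h_t^*)>R^*(\hat h_{t-1})$. What does hold is $\mathbb{E}[R^*(h_t^*)\mid\mathcal{F}_{t-1}]=R^*(\hat h_{t-1})$. Combined with (ii), this gives $\mathbb{E}[R^*(\hat h_t)\mid\mathcal{F}_{t-1}]\le R^*(\hat h_{t-1})-\underline{\Delta}\,\mathbb{E}[E_t\mid\mathcal{F}_{t-1}]$. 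Taking expectations and telescoping yields $V_T\le R^*(h_0)-\underline{\Delta}\sum_{t=1}^T\mathbb{P}(E_t=1)$, and your total-probability step finishes the proof. So replace the ``non-harmful'' assumption and the pathwise claim with this conditional-expectation statement, which your last paragraph already gestures at. With that repair your argument is a more explicit version of the paper's, which simply asserts the additive loss decomposition.
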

The proof is provided in Appendix \ref{app:proof_1}. When epistemic uncertainty increases, $U_{\mathrm{ood}}$ becomes larger, so the noise gap $\xi(h_t^{(2)})-\xi(h_t^*)$ is more likely to exceed the true gap $\Delta_t$, making the PRM ranking unreliable at OOD steps. Since OOD appears with probability $\varepsilon$ at each iteration, the induced mistake probability accumulates over $T$ steps, leading to a linear degradation in the final expected correctness.

\textbf{Uncertainty-Aware Regime} 
We now consider a second scenario in which epistemic uncertainty is explicitly taken into account during external reasoning. Following the same settings as above, instead of relying on a single point estimate, we draw $K_t$ i.i.d.\ samples $\phi_1,\dots,\phi_{K_t}\sim p(\phi\mid\mathcal{D}_{\mathrm{PRM}})$, where $K_t$ is allowed to increase with $t$. For each candidate $h\in\mathcal{H}_t$, we compute the empirical posterior mean $\bar R_t(h)\triangleq \frac{1}{K_t}\sum_{k=1}^{K_t} R_{\phi_k}(h).$ We then construct an Upper Confidence Bound (UCB) $U_t \triangleq \sqrt{\frac{2\ln t}{K_t}},$ and estimate the true reward by $\hat R_t(h)\triangleq \bar R_t(h)+U_t.$ The selection is based on the estimated reward. Then we have:
\begin{proposition}[Sublinear degradation under uncertainty-aware selection]
\label{prop:sublinear_degrade_ucb}
Consider the second scenario with uncertainty estimation. Suppose (i) $\mathbb{P}(\mathcal{O}_t)=\varepsilon$ for all $t$; (ii) the PRM estimators are unbiased, i.e., $\mathbb{E}_{\phi}[\bar R_t(h)]=R^*(h)$; and (iii) the selection policy $\hat h_t$ follows the UCB criterion: $\hat h_t \in \arg\max_{h\in\mathcal{H}_t} \left( \bar R_t(h) + \sqrt{\frac{2\ln t}{K_t}} \right)$. Then, we have the lower bound:
\begin{equation}
\label{eq:sublinear_bound}
\mathbb{E}[\mathrm{Acc}_T]\ \ge R^*(h_0) - 2\varepsilon\sum_{t=2}^T \sqrt{\frac{2\ln t}{K_t}} - O(1).
\end{equation}
In particular, if the sample budget scales as $K_t=\Omega(t)$, then
\begin{equation}
\mathbb{E}[\mathrm{Acc}_T]\ \ge R^*(h_0) - O\big(\varepsilon\sqrt{T\ln T}\big).
\end{equation}
\end{proposition}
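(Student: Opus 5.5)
The plan is to run the argument of Proposition~\ref{prop:linear_degrade_xi_compact} in reverse. Decompose the expected accuracy loss into per-step contributions, then show that each step's contribution is now controlled by the confidence width $U_t=\sqrt{2\ln t/K_t}$ instead of a constant $\underline{\Delta}\rho$.

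\textbf{Telescoping.} Write
\[
R^*(h_0)-\mathbb{E}[\mathrm{Acc}_T]=\sum_{t=1}^T \mathbb{E}\bigl[R^*(\hat h_{t-1})-R^*(\hat h_t)\bigr],
\]
which is exact. Then bound each increment by the instantaneous regret $r_t\triangleq R^*(h_t^*)-R^*(\hat h_t)$, where $h_t^*=\arg\max_{h\in\mathcal{H}_t}R^*(h)$. For this I will assume, as in the linear case, that under the fixed continuation policy the best continuation preserves value, so that $R^*(h_t^*)\ge R^*(\hat h_{t-1})$. Hence the loss is at most $\sum_t \mathbb{E}[r_t]$. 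Next split on the OOD event $\mathcal{O}_t$. On $\mathcal{O}_t^c$ the samples are concentrated (zero epistemic noise, matching assumption (iii) of the previous proposition), so $r_t=0$. Therefore
\[
\mathbb{E}[r_t]=\varepsilon\,\mathbb{E}[r_t\mid\mathcal{O}_t].
\]

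\textbf{Standard UCB step on $\mathcal{O}_t$.} Define the good event
\[
G_t=\bigl\{|\bar R_t(h)-R^*(h)|\le U_t\ \ \forall h\in\mathcal{H}_t\bigr\}.
\]
Since $R_{\phi_k}(h)\in[0,1]$ are i.i.d.\ and unbiased by (ii), Hoeffding gives
\[
\mathbb{P}\bigl(|\bar R_t(h)-R^*(h)|>U_t\bigr)\le 2\exp(-2K_tU_t^2)=2t^{-4}.
\]
A union bound over the $M$ candidates gives $\mathbb{P}(G_t^c)\le 2Mt^{-4}$. On $G_t$, the choice $\hat h_t$ maximizes $\bar R_t+U_t$, so
\[
R^*(h_t^*)\le \bar R_t(h_t^*)+U_t\le \bar R_t(\hat h_t)+U_t\le R^*(\hat h_t)+2U_t,
\]
that is, $r_t\le 2U_t$. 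On $G_t^c$, use $r_t\le 1$. Thus
\[
\mathbb{E}[r_t\mid\mathcal{O}_t]\le 2U_t+2Mt^{-4}/\varepsilon
\]
(or simply bound the unconditional failure contribution by $2Mt^{-4}$). The $t=1$ term has $\ln 1=0$, so I bound it by a constant. Summing over $t$, the failure terms give $\sum_t 2Mt^{-4}=O(1)$, and this yields
\[
R^*(h_0)-\mathbb{E}[\mathrm{Acc}_T]\le 2\varepsilon\sum_{t=2}^T U_t+O(1),
\]
which is \eqref{eq:sublinear_bound}.

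\textbf{Rate.} With $K_t\ge ct$, each term satisfies $U_t\le\sqrt{2\ln T/(ct)}$. Using $\sum_{t\le T}t^{-1/2}\le 2\sqrt T$ gives
\[
\sum_{t=2}^T U_t=O\bigl(\sqrt{T\ln T}\bigr),
\]
and hence the stated $O(\varepsilon\sqrt{T\ln T})$ bound.

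\textbf{Main obstacle.} The difficult step is the first reduction, linking the per-step change in $R^*(\hat h_t)$ to the instantaneous ranking regret $r_t$. The proposition does not state this value-preservation condition explicitly, so I would import it in the same form used for Proposition~\ref{prop:linear_degrade_xi_compact}: the true value along the chosen path drops only by the gap between the best and the chosen candidate. A second subtlety is making the ``zero regret off $\mathcal{O}_t$'' claim consistent with the UCB rule. The width $U_t$ is the same for all candidates, so it does not change the ranking. Hence off-OOD the argmax coincides with the point-estimate argmax, which is correct by the in-distribution assumption.
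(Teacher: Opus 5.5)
Your proposal is correct and follows essentially the same route as the paper's proof: reduce the accuracy loss to per-step instantaneous regrets $r_t$, split on $\mathcal{O}_t$ with zero (the paper says ``negligible'') regret off OOD, use Hoeffding plus a union bound for the good event with failure probability $2Mt^{-4}$, apply the three-line UCB chain to get $r_t\le 2U_t$, and sum. Your bookkeeping is somewhat more careful. You derive the loss-to-regret reduction as an inequality under an explicit value-preservation assumption, where the paper just asserts $\mathbb{E}[\mathrm{Acc}_T]=R^*(h_0)-\sum_t\mathbb{E}[r_t]$. You bound the failure term via $\mathbb{P}(G_t^c\cap\mathcal{O}_t)\le\mathbb{P}(G_t^c)$ instead of implicitly applying Hoeffding conditionally on $\mathcal{O}_t$. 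Finally, you replace the paper's integral approximation with the rigorous estimate $\sum_{t\le T}t^{-1/2}\le 2\sqrt{T}$.
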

The proof is provided in Appendix \ref{app:proof_2}. Proposition~\ref{prop:sublinear_degrade_ucb} demonstrates that by explicitly modeling epistemic uncertainty, the accumulation of errors shifts from a linear regime ($O(T)$) to a sublinear regime ($O(\sqrt{T})$). The UCB term ensures that even when OOD traces appear with probability $\varepsilon$, the potential loss at each step decreases over time as the sample size $K_t$ grows.

\begin{figure*}
\centering
    \includegraphics[width=\linewidth]{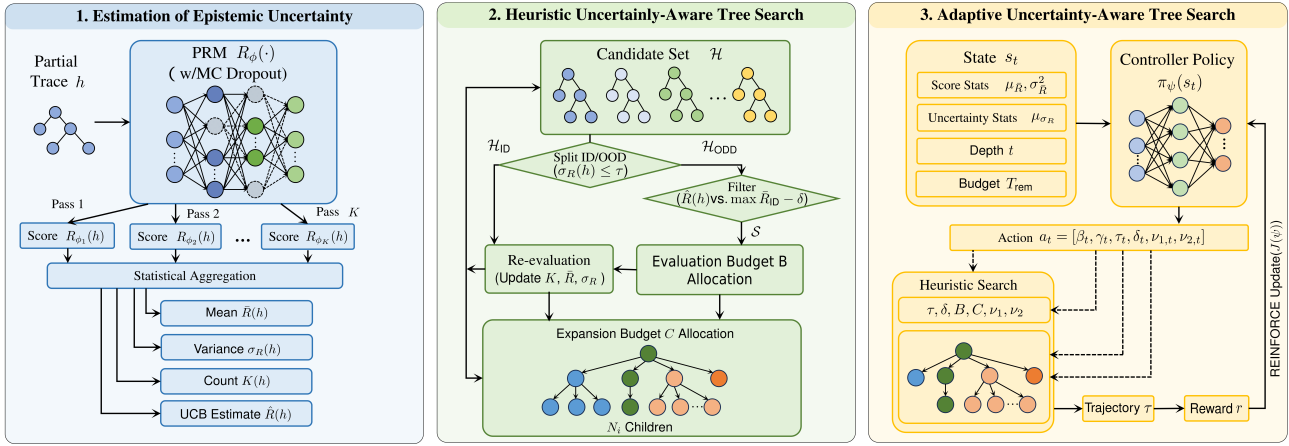}
    \caption{Overview of the proposed Uncertainty-Aware Tree Search (UATS) framework. The pipeline consists of three phases: (1) Approximate the PRM's uncertainty using Monte Carlo Dropout. (2) Use uncertainty thresholds ($\tau$) and optimism margins ($\delta$) to selectively re-evaluate ambiguous nodes and allocate expansion budgets. (3) Observes the search state $s_t$ and dynamically outputs action vectors $a_t$ to modulate budget factors and temperature parameters.}
\end{figure*}

\subsection{Theoretical Implications for Method Design}
\label{sec:inspire}
The theoretical analysis above reveals a fundamental limitation of uncertainty-agnostic external reasoning. In Scenario 1, epistemic uncertainty induces a non-negligible probability of mis-ranking candidate reasoning traces, and such errors accumulate linearly with the reasoning horizon. In contrast, Scenario 2 demonstrates that explicitly accounting for uncertainty mitigates the adverse effect of OOD candidates, yielding only a sublinear degradation in the final expected correctness. These results motivate the development of an uncertainty-aware external reasoning framework.

Translating this principle into a practical algorithm, presents two modeling challenges. (i) Uncertainty-aware selection requires access to multiple samples from the parameter posterior $p(\phi\mid\mathcal{D}_{\mathrm{PRM}})$. In practice, obtaining multiple parameter instances that are both computationally efficient and sufficiently diverse to faithfully reflect epistemic uncertainty is non-trivial. (ii) Incorporating uncertainty introduces an explicit trade-off between evaluation accuracy and computational cost. Given a fixed inference-time budget, it remains unclear how resources should be allocated between spending more computation to obtain a more accurate estimate of $R^*(h)$ via repeated PRM scoring and expanding the search by generating additional child nodes, especially when candidates differ in their uncertainty and reliability.

In the next section, we propose an uncertainty-aware external reasoning framework that addresses these challenges in a unified and principled manner.

\section{Methodology}
To address the challenges above, we propose an uncertainty-aware external reasoning framework that integrates uncertainty estimation, heuristic control, and learned adaptation in a unified pipeline: we first approximate epistemic uncertainty via Monte Carlo Dropout by sampling multiple stochastic PRM forward passes to obtain both a reward estimate and an uncertainty measure; we then design a heuristic search that allocates computation between repeated PRM scoring and expanding additional child nodes under a fixed inference-time budget; finally, we treat this heuristic as an initialization and learn a dynamic controller via reinforcement learning that adaptively outputs budget allocation during search, enabling more effective reasoning.

\subsection{Estimation of Epistemic Uncertainty}
\label{sec:estimate}
Given a partial reasoning trace $h$, to approximate the posterior variability of PRM predictions need drawing multiple parameter instances from $p(\phi\mid\mathcal{D}_{\mathrm{PRM}})$. Concretely, we sample $\phi_1,\dots,\phi_K\sim p(\phi\mid\mathcal{D}_{\mathrm{PRM}})$ and evaluate the same trace with $R_{\phi_1}(h),\dots,R_{\phi_K}(h)$, which yields a set of stochastic scores reflecting epistemic uncertainty.

In practice, directly sampling network parameters from the true posterior is intractable for modern PRMs. We therefore adopt Monte Carlo Dropout \cite{gal2016dropout} as an efficient approximation. Specifically, we enable dropout layers during inference and perform $K$ stochastic forward passes of the PRM on the same input $h$, each pass corresponding to a different dropout mask and thus a different effective parameter realization. Denoting the resulting scores by $\{R_{\phi_k}(h)\}_{k=1}^K$, we compute the empirical mean score and the empirical uncertainty as the sample variance:
\begin{equation}
\resizebox{0.9\linewidth}{!}{
$
\displaystyle
\bar{R}(h)\triangleq \frac{1}{K}\sum_{k=1}^{K} R_{\phi_k}(h),\ \sigma_R^2(h)\triangleq \frac{1}{K-1}\sum_{k=1}^{K}\big(R_{\phi_k}(h)-\bar{R}(h)\big)^2.
$
}
\end{equation}
Here, $\bar{R}(h)$ serves as a posterior-averaged estimate of the PRM score, while $\sigma_R(h)$ captures the epistemic dispersion across sampled parameter instances.

To incorporate an explicit optimism bonus, we further construct an upper-confidence estimate at reasoning step $t$,
\begin{equation}
\hat{R}(h)\triangleq \bar{R}(h) + \alpha \sqrt{\frac{2\ln t}{K}},
\label{eq:ucb_estimator}
\end{equation}
where $\alpha$ is a exploration parameter. Intuitively, increasing $K$ reduces the uncertainty of $\bar{R}(h)$ by averaging multiple stochastic evaluations, while the term $\sqrt{\frac{2\ln t}{K}}$ encourages exploration by accounting for finite-sample uncertainty in the estimate. In the subsequent subsections, we use $(\bar{R}(h),\sigma_R(h),K(h),\hat{R}(h))$ as the key signals to guide search-time budget allocation.

\subsection{Heuristic Uncertainty-Aware Tree Search}
\label{sec:heuristic}
To address the trade-off between more accurate PRM evaluation and broader tree expansion under a fixed compute budget, we introduce a Heuristic Uncertainty-Aware Tree Search procedure (H-UATS). The method uses $(\bar{R}(h),\sigma_R(h),K(h),\hat{R}(h))$ estimated above to (i) decide which candidates deserve additional PRM evaluations and (ii) allocate child-node expansion budget.

Given a candidate set of partial traces $\mathcal{H}$ at a search step, we first perform an initial evaluation pass by running $K_0$ Monte Carlo Dropout forward passes for each $h\in\mathcal{H}$. We then compute $(\bar{R}(h),\sigma_R(h),K(h),\hat{R}(h))$ following Section \ref{sec:estimate}, where $K(h)=K_0$ after initialization. Next, we partition candidates into two subsets based on epistemic uncertainty: using a threshold on $\sigma_R(h)$, we obtain ID subset $\mathcal{H}_{\mathrm{ID}} \triangleq \{h\in\mathcal{H}: \sigma_R(h)\le \tau\}$ and OOD subset $\mathcal{H}_{\mathrm{OOD}} \triangleq \{h\in\mathcal{H}: \sigma_R(h)>\tau\}$, where $\tau$ is a fixed or step-dependent threshold.

The core of the heuristic lies in selectively investing the additional PRM evaluation budget $B$ on uncertain candidates that show promise. Specifically, an OOD candidate $h_i\in\mathcal{H}_{\mathrm{OOD}}$ is selected for re-evaluation only if its optimistic estimate is competitive with the best ID candidate, satisfying $\hat{R}(h_i) \ge \max_{h\in\mathcal{H}_{\mathrm{ID}}}\bar{R}(h) - \delta$, where $\delta$ is a retention margin. For the subset of OOD candidates $\mathcal{S}$ that meet this criterion, we allocate the additional evaluation budget $B$ according to a softmax distribution over their optimistic scores, modulated by a temperature parameter $\nu_1$:
\begin{equation}
K_i \;=\; \mathrm{Round}\!\left(
B\cdot\frac{\exp(\hat{R}(h_i)/\nu_1)}{\sum_{h_j\in\mathcal{S}}\exp(\hat{R}(h_j)/\nu_1)}
\right).
\label{eq:eval_budget_alloc}
\end{equation}
A lower $\nu_1$ concentrates the re-evaluation budget on the few most promising OOD candidates to aggressively reduce their uncertainty, while a higher $\nu_1$ distributes the budget more uniformly for broader verification.

Following the additional evaluations and the subsequent update of the posterior statistics, we proceed to allocate the child-node expansion budget $C$. Adopting the REBASE principle \cite{wuInferenceScalingLaws2024}, we assign expansion counts proportional to the exponentiated posterior mean scores, controlled by a second temperature parameter $\nu_2$:
\begin{equation}
N_i \;=\; \mathrm{Round}\!\left(
C\cdot\frac{\exp(\bar{R}(h_i)/\nu_2)}{\sum_{h_j\in\mathcal{H}}\exp(\bar{R}(h_j)/\nu_2)}
\right).
\label{eq:child_alloc}
\end{equation}
We then expand each candidate $h_i$ to generate $N_i$ children.

This heuristic achieves a principled separation of concerns. For OOD candidates, the algorithm first uses the optimistic estimate $\hat{R}(h)$ to identify those that could plausibly compete with the best ID trace, and selectively allocates additional evaluation budget to reduce epistemic error and avoid premature elimination. Conversely, uncertain candidates with insufficient potential are deprioritized early to conserve computation. After re-evaluation, child-node expansion follows a score-proportional REBASE allocation based on $\bar{R}(h)$, which balances exploitation of high-scoring traces and continued exploration across the candidate set.

\begin{figure*}[t]
    \centering
    \includegraphics[width=\linewidth]{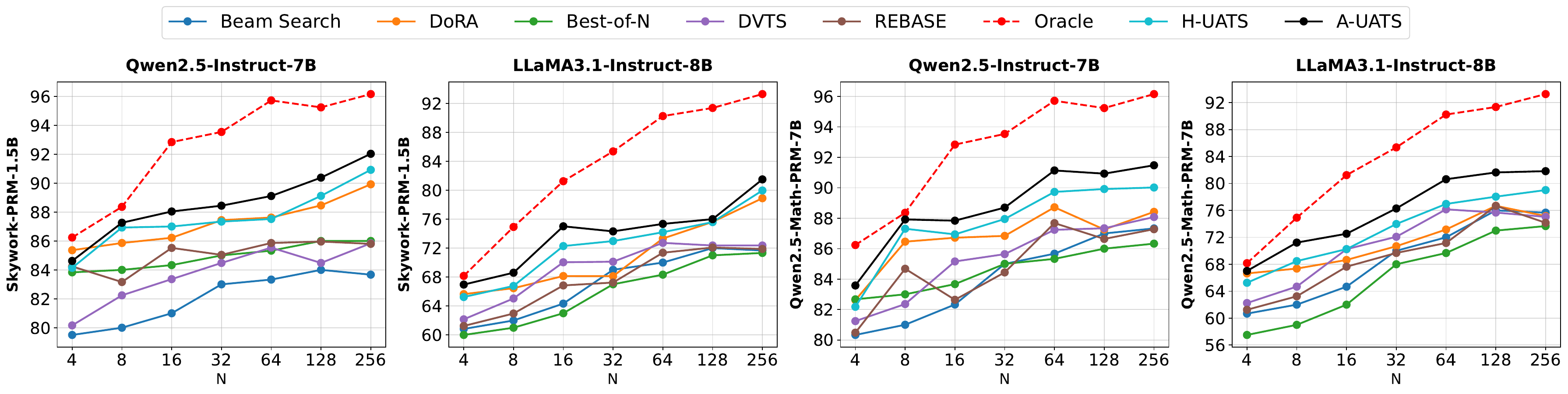}
    \caption{Accuracy comparison on MATH-500 across different Policy Model and PRM combinations. The x-axis represents the number of candidate paths ($N$) on a logarithmic scale. Our methods, H-UATS and A-UATS, consistently outperform standard baselines. }
 
    \label{fig:math}
\end{figure*}

\subsection{Adaptive Uncertainty-Aware Controller}
\label{sec:adaptive}
While the H-UATS effectively leverages uncertainty, it relies on static hyperparameters, including the budgets $B, C$, thresholds $\tau, \delta$, and temperatures $\nu_1, \nu_2$, which remain fixed throughout the reasoning process. However, the complexity of reasoning is inherently dynamic: some steps are straightforward and require minimal verification, while others are ambiguous and demand extensive exploration. A static allocation policy inevitably leads to inefficiency. To address this, we propose Adaptive Uncertainty-Aware Tree Search (A-UATS), parametrized by a controller $\psi$ to dynamically modulate these parameters.

Formally, we define the dynamic budget regulation problem as a Markov Decision Process (MDP). The state space $\mathcal{S}$ is designed to capture the statistics of the current candidate set. At step $t$, the state $s_t$ includes the distribution statistics of the PRM scores (mean $\mu_{\bar{R}}$, max $\max \bar{R}$, variance $\sigma^2_{\bar{R}}$), the epistemic uncertainty statistics (mean $\mu_{\sigma_R}$, max $\max \sigma_R$), the current search depth $t$, and the remaining global computation budget $T_{\text{rem}}$. Based on this state, the controller $\pi_\psi(s_t)$ outputs a comprehensive action vector $a_t \in \mathbb{R}^6$ to control the search dynamics: $a_t = [\beta_t, \gamma_t, \tau_t, \delta_t, \nu_{1,t}, \nu_{2,t}].$
These components correspond to distinct control mechanisms: the budget factors $\beta_t$ and $\gamma_t$ scale the maximum evaluation and expansion limits.
The filtering thresholds $\tau_t$ and $\delta_t$ dynamically define the boundary between ID and OOD samples and the optimism margin for retaining OOD candidates; and the temperatures $\nu_{1,t}$ and $\nu_{2,t}$ control the exploration-exploitation trade-off by determining the concentration of the re-evaluation and expansion distributions. These dynamic values substitute their static counterparts in Equation \ref{eq:eval_budget_alloc} and \ref{eq:child_alloc}. The environment provides a reward $r$ at the end of the episode to guide the learning process. We define the reward as $r = \mathbbm{1}(\hat{a} = a^*) - \lambda \cdot (\text{Cost}_{\text{used}} / \text{Cost}_{\text{total}})$, where $\mathbbm{1}(\cdot)$ is the indicator function, $\text{Cost}_{\text{used}}$ and $\text{Cost}_{\text{total}}$ are the actual and total computational cost, and $\lambda$ is a penalty coefficient.

We optimize the controller to maximize the expected reward $J(\psi) = \mathbb{E}_{\tau \sim \pi_\psi} [r(\tau)]$. To estimate the policy gradient while reducing variance, we employ the REINFORCE algorithm \cite{williams1992simple} with a batch-mean baseline $b = \frac{1}{M}\sum_{j=1}^M r(\tau_j)$. For a batch of $M$ trajectories $\{\tau_i\}_{i=1}^M$, the gradient is computed as $\nabla_\psi J(\psi) \approx \frac{1}{M} \sum_{i=1}^M \sum_{t=0}^{|\tau_i|} \nabla_\psi \log \pi_\psi(a_{i,t} | s_{i,t}) \cdot (r(\tau_i) - b)$. To address reward sparsity, a two-stage paradigm is used: first, we initialize $\psi$ via behavioral cloning to mimic the heuristic in Section \ref{sec:heuristic}; subsequently, we fine-tune the controller to learn adaptive behaviors, such as dynamically adjusting budgets and temperatures under high uncertainty.

\section{Experiments}
\label{sec:experiments}
In this section, we begin by describing the implementation details. We then present the results of the evaluation, followed by an ablation study to analyze how it works well. More results are provided in the Appendices \ref{app:full}, \ref{app:hyper} and \ref{app:abl}.

\subsection{Implementation Details}
\label{sec:implementation}
We present the implementation of our proposed H-UATS and A-UATS. The H-UATS is designed as a training-free inference strategy, requiring no parameter updates. The specific hyperparameters are detailed in Appendix \ref{app:hyper}. In order to compare against baselines under a compute-matched setting, we calibrate the relative cost of policy generation and PRM evaluation using wall-clock latency on the same A100 hardware. In our setup, a single reasoning step has an average length of 56 tokens (batch size 4). We measure that generating one step takes 571 ms on average, while a single PRM forward pass with dropout takes 32 ms on average, yielding a latency ratio of $571 / 32 \approx 17.8$. Accordingly, we treat the cost of generating one reasoning step as equivalent to 18 stochastic PRM passes under our implementation and batching configuration. All methods are then evaluated under the same budget: baselines can spend the budget on generating more candidates, whereas our method allocates an equivalent portion to parallelized uncertainty estimation.

For A-UATS, we train the budget controller using the REINFORCE algorithm on the training split of the MATH dataset. We utilize a diverse set of Policy Models: LLaMA-3.1-Instruct-8B and Qwen2.5-Instruct-3B, and PRMs: Math-Shepherd-PRM-7B and Qwen2.5-Math-PRM-7B. In each training round, we randomly sample a question and a Policy-PRM pair. The current controller then guides the generation of 10 complete reasoning trajectories. We compute the cumulative rewards based on the correctness of the final answers and estimate the advantage function to update the controller's parameters. This optimization process is conducted for a total of 500 update rounds. 

\subsection{Math Reasoning Evaluation}
\textbf{Experimental Setup} We evaluate the proposed methods on two mathematical reasoning benchmarks: MATH-500 \cite{lightmanLetsVerifyStep2023} and AIME24 \cite{ai-mo2024}. We evaluate UATS across a diverse set of frozen policy models, including LLaMA3.1-Instruct-8B, LLaMA3.2-Instruct-1B, Qwen2.5-Instruct (0.5B, 3B, and 7B). For the PRM, we include Math-Shepherd-PRM-7B, Skywork-PRM-1.5B, and Qwen2.5-Math-PRM-7B. We compare our method with the following methods: Best-of-N, Beam Search, REBASE \cite{wuInferenceScalingLaws2024}, DORA \cite{wang2025every}, and DVTS \cite{beeching2024scalingtesttimecompute}. For each method, we report answer accuracy as a function of the number of candidate paths, using $N \in \{4, 8, 16, 32, 64, 128, 256\}$. Additionally, we include an Oracle baseline (Pass@K) to represent the performance upper bound. The full results are provided in Appendix~\ref{app:full}. 

\textbf{Main Results} 
The performance of Qwen2.5-Instruct-7B, and LLaMA3.1-Instruct-8B on the MATH-500 dataset, evaluated under two PRMs: Skywork-PRM-1.5B and Qwen2.5-Math-PRM-7B on MATH-500 is shown in Figure~\ref{fig:math}. We observe that H-UATS consistently outperforms existing baselines across varying computational budgets. Furthermore, A-UATS demonstrates superior performance, surpassing H-UATS in most configurations. This confirms that dynamic allocation outperforms static heuristics by adaptively optimizing the controller based on candidate state.
 
\begin{figure}[tb]
    \centering
    \begin{subfigure}[t]{0.48\linewidth}
        \centering
        \includegraphics[width=\linewidth]{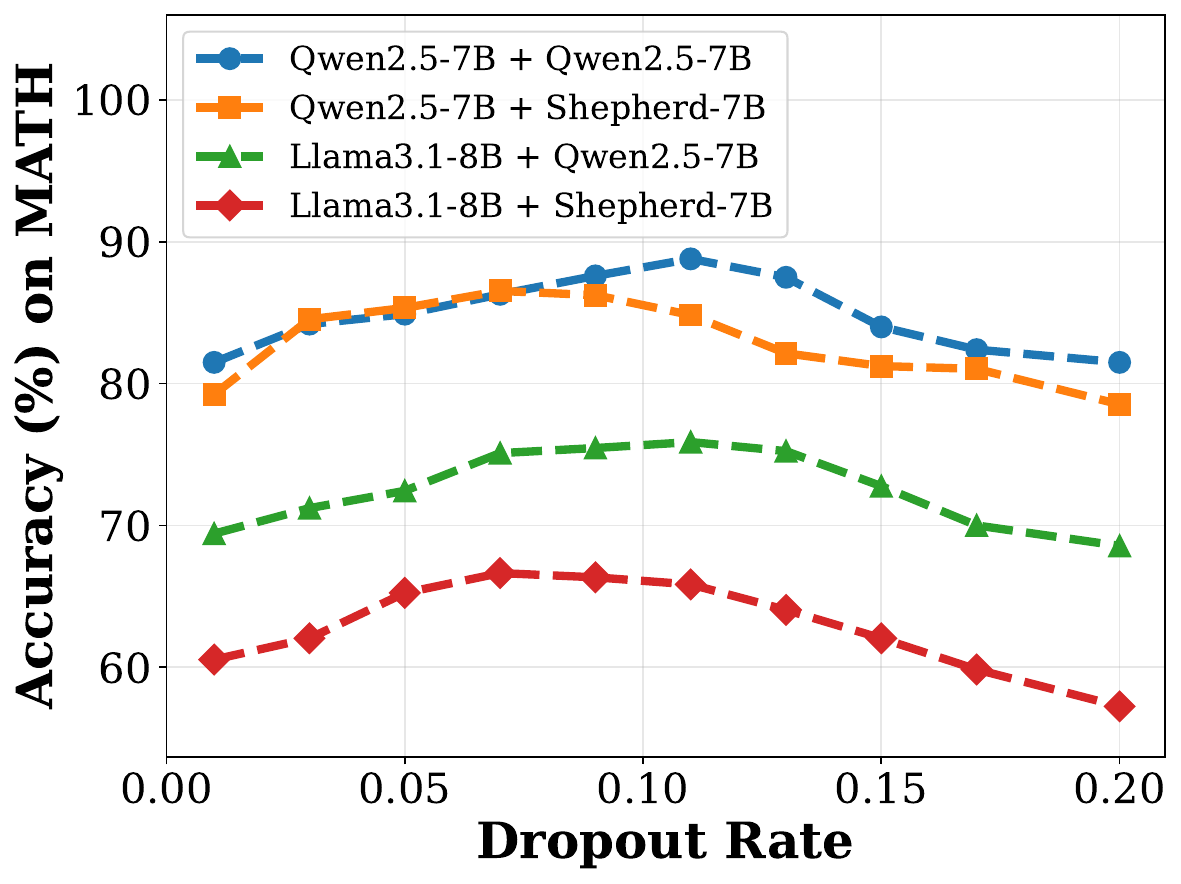}
        \caption{Ablation on dropout rate}
        \label{fig:5_a}
    \end{subfigure}
    \hfill
    \begin{subfigure}[t]{0.48\linewidth}
        \centering
    \includegraphics[width=\linewidth]{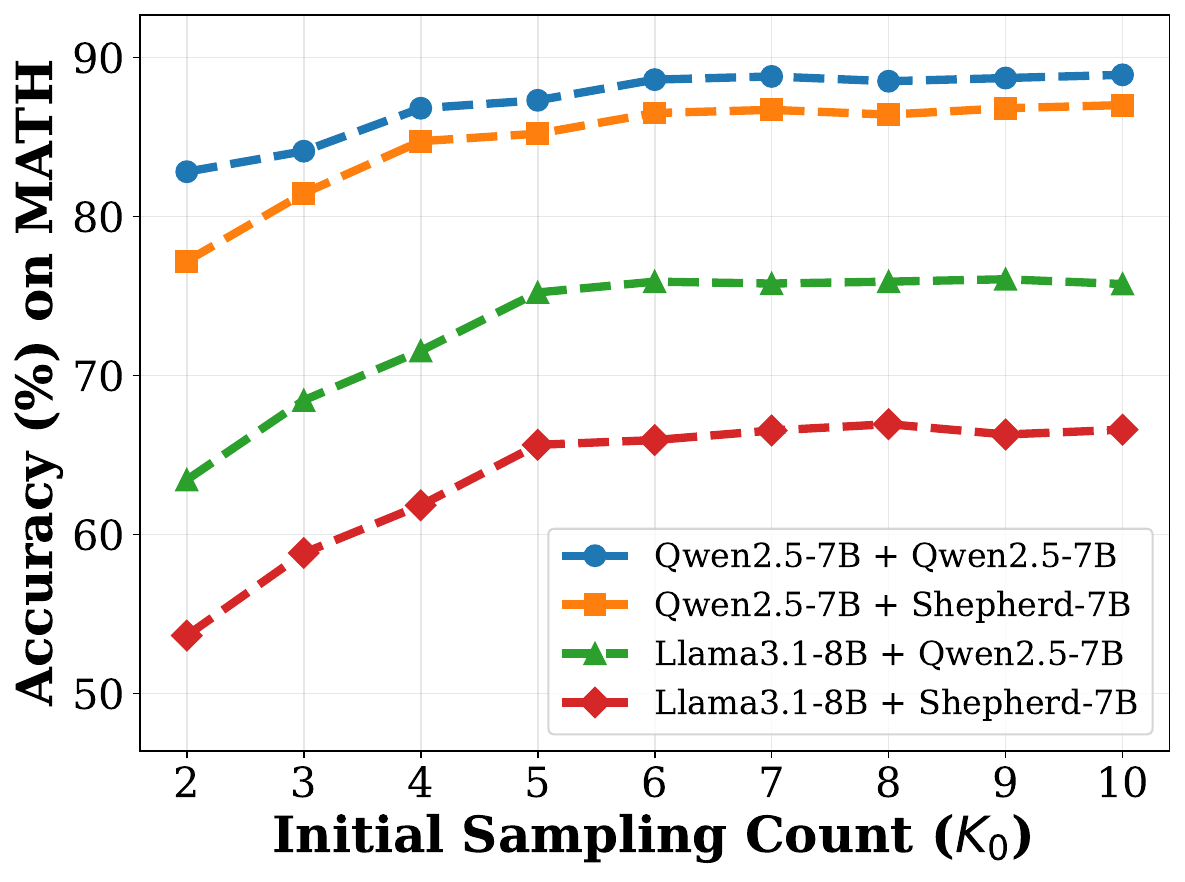}
        \caption{Ablation on $K_0$}
        \label{fig:5_b}
    \end{subfigure}
    \caption{Ablation studies on: (a) Dropout rate. (b) Initial Sampling Count for uncertainty estimation.}
    \label{fig:abl_main}
\end{figure}

\subsection{Ablation Studies}
We evaluate the effect of the dropout rate $p$ by varying it from 0.01 to 0.20 across four Policy–PRM combinations, involving the policy models Qwen2.5-Instruct-7B and LLaMA3.1-Instruct-8B, and the PRMs Qwen2.5-Math-PRM-7B and Math-Shepherd-PRM-7B. This parameter controls the stochasticity of the uncertainty estimates. As illustrated in Figure~\ref{fig:5_a}, insufficient noise with a lower $p$ fails to capture epistemic uncertainty, while excessive noise with higher $p$ destabilizes the reward signal. Notably, the optimal rate depends on the verifier's architecture; Qwen2.5-Math-PRM peaks at $p \approx 0.10$, whereas Math-Shepherd-PRM requires a more conservative $p \approx 0.07$, reflecting different tolerance levels to feature perturbations.

We evaluate the initial sampling count $K_0 \in [2, 10]$, which establishes the baseline uncertainty for candidate filtering. As shown in Figure~\ref{fig:5_b}, accuracy improves consistently with higher $K_0$, as more samples reduce the variance of the initial estimators. However, the marginal gains diminish rapidly, with the performance curve plateauing around $K_0 = 7$. This suggests that a modest number of initial passes is sufficient to distinguish ID from OOD candidates.

\section{Conclusion}
In this work, we identify the critical challenge of epistemic uncertainty in PRMs, demonstrating that distribution shifts during inference undermine search reliability. We provide theoretical analysis to prove that uncertainty-aware strategies can reduce search regret from linear to sublinear growth. Building on these insights, we propose UATS, a framework that dynamically optimizes compute allocation via a learned controller. Extensive experiments confirm that our approach effectively mitigates OOD errors, achieving superior accuracy compared to baselines. Our findings underscore the vital role of uncertainty calibration in reliable reasoning.

\section*{Impact Statement}
This paper presents work whose goal is to advance the field of Machine Learning. There are many potential societal consequences of our work, none which we feel must be specifically highlighted here.

\bibliography{main}
\bibliographystyle{icml2026}

\newpage
\appendix
\onecolumn

\section*{Appendix}

The appendix is organized as follows:
\begin{enumerate}
    \item \textbf{Appendix \ref{app:proof_1}} presents the proof of Proposition 4.1.
    \item \textbf{Appendix \ref{app:proof_2}} presents the proof of Proposition 4.2.
    \item \textbf{Appendix \ref{app:compare}} provides comparison and discussion with related works.
    \item \textbf{Appendix \ref{app:hyper}} provides specific implementation of Hyperparameters.
    \item \textbf{Appendix \ref{app:full}} provides the full results.
    \item \textbf{Appendix \ref{app:abl}} provides additional ablation studies.
    \item \textbf{Appendix \ref{app:quali}} provides qualitative results.
\end{enumerate}

\section{Proof of Proposition \ref{prop:linear_degrade_xi_compact}}
\label{app:proof_1}
\begin{proof}
We analyze the degradation of the expected accuracy by examining the error probability at each reasoning step $t \in \{1, \dots, T\}$.

Let $E_t$ denote the event that the selected hypothesis is incorrect at step $t$, i.e., $E_t = \{ \hat h_t \neq h_t^* \}$. We aim to lower-bound the probability of this error event, $\mathbb{P}(E_t)$. Using the Law of Total Probability with respect to the OOD event $\mathcal{O}_t$, we have:
\begin{equation}
    \mathbb{P}(E_t) = \mathbb{P}(E_t \mid \mathcal{O}_t)\mathbb{P}(\mathcal{O}_t) + \mathbb{P}(E_t \mid \mathcal{O}_t^c)\mathbb{P}(\mathcal{O}_t^c).
\end{equation}
Applying the assumptions given in the proposition:
\begin{enumerate}
    \item[(i)] The OOD probability is fixed: $\mathbb{P}(\mathcal{O}_t) = \varepsilon$.
    \item[(iii)] The conditional error probabilities satisfy $\mathbb{P}(E_t \mid \mathcal{O}_t) \ge \rho$ and $\mathbb{P}(E_t \mid \mathcal{O}_t^c) = 0$.
\end{enumerate}
Substituting these into the equation, we obtain a lower bound for the error probability at step $t$:
\begin{equation}
    \mathbb{P}(E_t) \ge \rho \cdot \varepsilon + 0 \cdot (1-\varepsilon) = \varepsilon\rho.
    \label{eq:prob_error_bound}
\end{equation}

Next, we quantify the impact of these errors on the final accuracy. Let $L_t$ be the random variable representing the reduction in success probability caused by the selection at step $t$. According to assumption (ii), if an error occurs ($E_t$ holds), the success probability decreases by at least $\underline{\Delta}$; otherwise, the reduction is non-negative (we assume no improvement from incorrect steps). Thus, we can bound the expected loss at step $t$ as:
\begin{equation}
    \mathbb{E}[L_t] \ge \underline{\Delta} \cdot \mathbb{P}(E_t) + 0 \cdot \mathbb{P}(E_t^c).
\end{equation}
Using the bound from Eq.~\eqref{eq:prob_error_bound}:
\begin{equation}
    \mathbb{E}[L_t] \ge \underline{\Delta} \cdot (\varepsilon\rho) = \varepsilon\rho\underline{\Delta}.
\end{equation}

The final accuracy $\mathrm{Acc}_T$ is the initial potential reward $R^*(h_0)$ minus the cumulative losses over all $T$ steps. Taking the expectation:
\begin{equation}
    \mathbb{E}[\mathrm{Acc}_T] \le R^*(h_0) - \sum_{t=1}^{T} \mathbb{E}[L_t].
\end{equation}
Substituting the lower bound for the expected loss at each step:
\begin{align}
    \mathbb{E}[\mathrm{Acc}_T] &\le R^*(h_0) - \sum_{t=1}^{T} \varepsilon\rho\underline{\Delta} \nonumber \\
    &= R^*(h_0) - T\,\varepsilon\,\underline{\Delta}\,\rho.
\end{align}
This concludes the proof.
\end{proof}

\section{Proof of Proposition \ref{prop:sublinear_degrade_ucb}}
\label{app:proof_2}
\begin{proof}
To analyze the expected performance, we define the instantaneous regret at step $t$ as $r_t \triangleq R^*(h_t^*) - R^*(\hat h_t)$. The final accuracy is the initial potential minus the cumulative regret: $\mathbb{E}[\mathrm{Acc}_T] = R^*(h_0) - \sum_{t=1}^T \mathbb{E}[r_t]$.

First, we analyze the concentration of the reward estimates. Let $\mathcal{E}_t$ denote the ``good event'' where the empirical mean estimates for all candidates in $\mathcal{H}_t$ are close to their true values within the confidence interval $U_t$:
\begin{equation}
    \mathcal{E}_t = \left\{ \forall h \in \mathcal{H}_t, \ | \bar{R}_t(h) - R^*(h) | \le U_t \right\},
\end{equation}
where $U_t = \sqrt{\frac{2\ln t}{K_t}}$.
Using Hoeffding's inequality for a single candidate $h$, the probability of deviation is bounded by:
\begin{equation}
    \mathbb{P}(|\bar{R}_t(h) - R^*(h)| > U_t) \le 2\exp\left(-2 K_t U_t^2\right) = 2\exp(-4\ln t) = 2t^{-4}.
\end{equation}
Applying the union bound over all $M$ candidates in $\mathcal{H}_t$, the probability that the good event fails is:
\begin{equation}
    \mathbb{P}(\mathcal{E}_t^c) \le \sum_{h \in \mathcal{H}_t} 2t^{-4} = 2M t^{-4}.
    \label{eq:prob_bad_event}
\end{equation}

Next, we bound the expected regret $\mathbb{E}[r_t]$. We decompose this expectation based on the occurrence of the OOD event $\mathcal{O}_t$. Recall that $\mathbb{P}(\mathcal{O}_t)=\varepsilon$.
\begin{enumerate}
    \item \textbf{Case 1: No OOD ($\mathcal{O}_t^c$).} We assume the model performs optimally or near-optimally in distribution, contributing negligible regret compared to the OOD case.
    \item \textbf{Case 2: OOD occurs ($\mathcal{O}_t$).} We further decompose based on the concentration event $\mathcal{E}_t$:
    \begin{itemize}
        \item If $\mathcal{E}_t$ holds: The UCB selection strategy guarantees a regret bound. By definition of $\hat h_t$ (optimizing the upper bound) and $\mathcal{E}_t$ (true value contained in bound):
        \begin{equation}
            R^*(h_t^*) \le \bar{R}_t(h_t^*) + U_t \le \bar{R}_t(\hat h_t) + U_t \le R^*(\hat h_t) + 2U_t.
        \end{equation}
        Thus, $r_t = R^*(h_t^*) - R^*(\hat h_t) \le 2U_t$.
        \item If $\mathcal{E}_t^c$ holds: The regret is bounded by the maximum possible reward difference, denoted by $C_{\max}$ (e.g., 1). This occurs with probability at most $2Mt^{-4}$.
    \end{itemize}
\end{enumerate}

Combining these, the expected regret at step $t$ is dominated by the OOD interaction:
\begin{align}
    \mathbb{E}[r_t] &\le \varepsilon \left( \mathbb{P}(\mathcal{E}_t) \cdot 2U_t + \mathbb{P}(\mathcal{E}_t^c) \cdot C_{\max} \right) \nonumber \\
    &\le 2\varepsilon U_t + 2M C_{\max} \varepsilon t^{-4}.
\end{align}

Finally, we sum the expected regret over $t=1$ to $T$ to bound the total degradation:
\begin{equation}
    R^*(h_0) - \mathbb{E}[\mathrm{Acc}_T] = \sum_{t=1}^T \mathbb{E}[r_t] \le \sum_{t=1}^T \left( 2\varepsilon U_t + O(\varepsilon t^{-4}) \right).
\end{equation}
Since $\sum_{t=1}^\infty t^{-4}$ converges to a constant, the second term is $O(1)$. For the first term, substituting $U_t = \sqrt{\frac{2\ln t}{K_t}}$:
\begin{equation}
    \text{Total Regret} \le 2\varepsilon \sum_{t=1}^T \sqrt{\frac{2\ln t}{K_t}} + O(1).
\end{equation}
If $K_t = \Omega(t)$, then $U_t \approx \sqrt{\frac{\ln t}{t}}$. Approximating the sum by an integral $\int_1^T \sqrt{\frac{\ln x}{x}} dx \approx \sqrt{T \ln T}$, we obtain:
\begin{equation}
    R^*(h_0) - \mathbb{E}[\mathrm{Acc}_T] = O\left( \varepsilon \sqrt{T \ln T} \right).
\end{equation}
This confirms the sublinear degradation.
\end{proof}

\section{Comparison with Related Works}
\label{app:compare}

\subsection{Comparison with REBASE}
REBASE \cite{wuInferenceScalingLaws2024} allocates inference compute based on a score-proportional principle, implicitly treating PRM scores as reliable point estimates. However, this assumption fails under distribution shifts, where PRMs often exhibit high confidence on incorrect OOD steps. UATS fundamentally differs by introducing an epistemic uncertainty gate: instead of blindly trusting raw scores, we explicitly model the reliability of the verifier. While REBASE asks ``where to expand based on quality,'' UATS first answers ``whether the quality assessment is trustworthy.'' By selectively re-evaluating high-variance candidates, UATS prevents the over-expansion of hallucinated high-scoring paths, a critical failure mode that REBASE cannot address. Furthermore, A-UATS extends this by dynamically learning the optimal trade-off between verification (re-evaluation) and search breadth, whereas REBASE relies on static hyperparameters.

\subsection{Comparison with DORA}
DORA \cite{wang2025every} focuses on macro-level resource allocation, aiming to reduce redundancy by allocating rollout budgets to distinct reasoning directions (i.e., solution clusters). In contrast, UATS addresses micro-level verification reliability, targeting the epistemic errors of the PRM at individual reasoning steps. The two methods solve orthogonal inefficiencies: DORA mitigates the diminishing returns of sampling similar paths, while UATS mitigates the risk of being misled by OOD generalizations. Consequently, they are highly complementary; UATS provides calibrated step-level signals that can enhance the direction-level decision-making in DORA.

\subsection{Comparison with MCTS}
Monte Carlo Tree Search (MCTS) estimates node values via computationally expensive, sequential lookahead rollouts. In LLM reasoning, such deep rollouts incur prohibitive latency and compute costs \cite{snellScalingLLMTestTime2024a}. UATS offers a more efficient alternative for value estimation: rather than generating future tokens (rollouts), we estimate value uncertainty via lightweight, parallelizable stochastic forward passes (MC Dropout) of the PRM. This design allows UATS to correct valuation errors and prune dead ends without the heavy overhead of sequential generation, making it significantly more scalable for test-time compute allocation.

\section{Implementation of Hyperparameters}
\label{app:hyper}

We provide a detailed listing of the hyperparameters used for both the Heuristic Uncertainty-Aware Tree Search (H-UATS) and the Adaptive Uncertainty-Aware Tree Search (A-UATS) in Table~\ref{tab:hyperparameters}. 

\begin{table}[h]
    \centering
    \caption{Hyperparameter settings for H-UATS and A-UATS experiments.}
    \label{tab:hyperparameters}
    \begin{tabular}{llc}
        \toprule
        \textbf{Method} & \textbf{Hyperparameter} & \textbf{Value} \\
        \midrule
        \multirow{3}{*}{Common} & Dropout Rate ($p$) & 0.07 $\sim$ 0.10 \\
                                & Max Candidate Paths ($N$) & 256 \\
                                & Exploration parameter ($\alpha$) & 0.3 \\
        \midrule
        \multirow{5}{*}{H-UATS} & Initial Sampling Count ($K_0$) & 7 \\
                                & Uncertainty Threshold ($\tau$) & 0.003 \\
                                & Optimism Margin ($\delta$) & 0.04 \\
                                & Re-evaluation Temperature ($\nu_1$) & 0.5 \\
                                & Expansion Temperature ($\nu_2$) & 0.2 \\
        \midrule
        \multirow{3}{*}{A-UATS} & Learning Rate & $1 \times 10^{-4}$ \\
                                & Optimizer & AdamW \\
                                & Cost Penalty ($\lambda$) & 0.05 \\
        \bottomrule
    \end{tabular}
\end{table}

\section{Full Results}
\label{app:full}
The full results of all policy models and PRMs in the MATH-500 dataset are provided in Figure \ref{fig:full_MATH}, and the results of AIME are provided in Figure \ref{fig:full_AIME}.

\begin{figure}[ht]
    \centering
    \includegraphics[width=\linewidth]{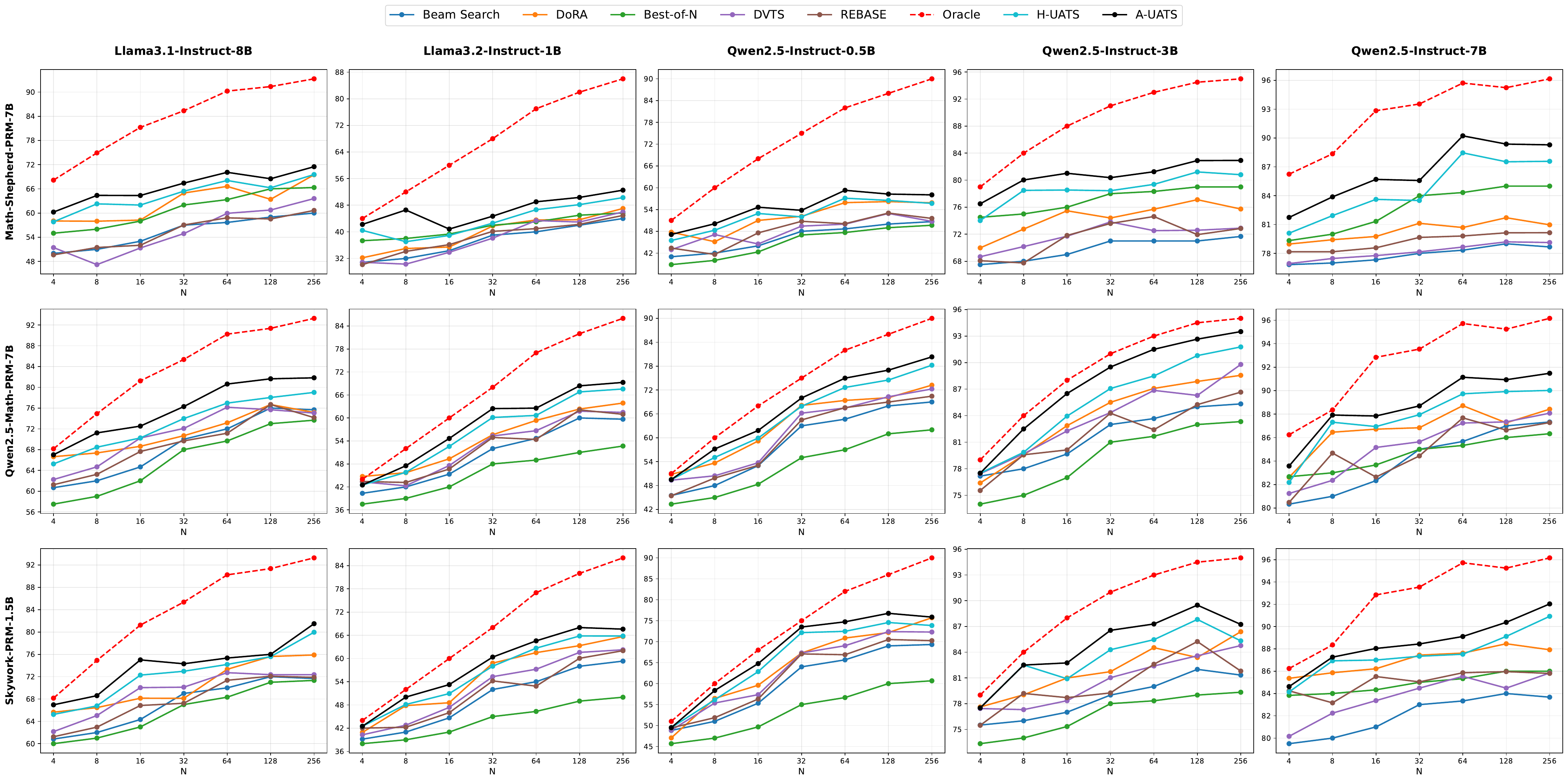}
    \caption{Full results on the MATH500 dataset.}
    \label{fig:full_MATH}
\end{figure}

\begin{figure}[ht]
    \centering
    \includegraphics[width=\linewidth]{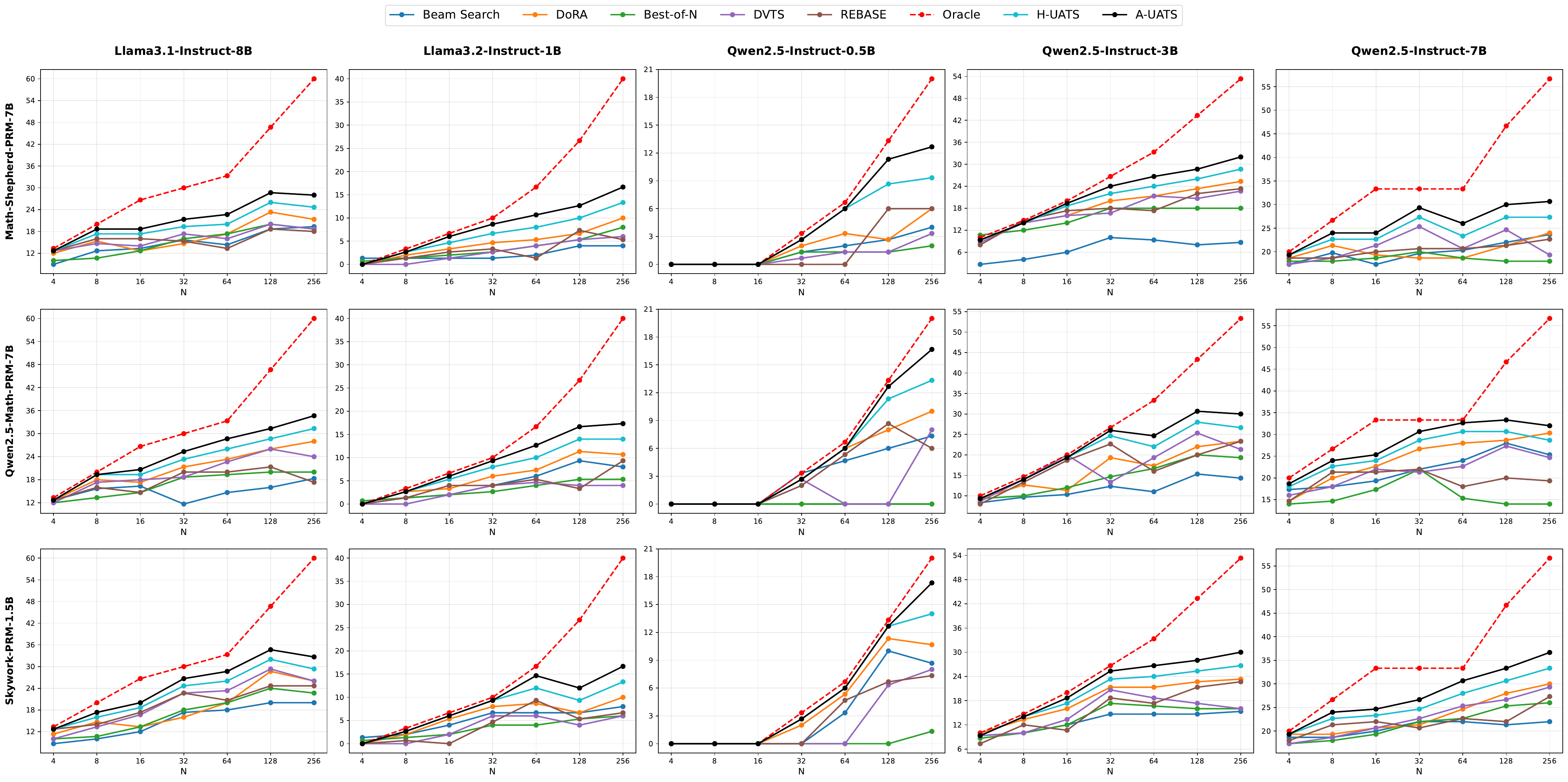}
    \caption{Full results on the AIME24 dataset.}
    \label{fig:full_AIME}
\end{figure}

\begin{table*}[h]
\centering
\caption{Average performance across all number of $N$ for different combinations on AIME and MATH datasets.}
\label{tab:results_summary}
\resizebox{\textwidth}{!}{
\begin{tabular}{cc|ccccccc|ccccccc}
\toprule
\multicolumn{2}{c|}{\textbf{Model Pair}} & \multicolumn{7}{c|}{\textbf{AIME}} & \multicolumn{7}{c}{\textbf{MATH}} \\
\textbf{Policy} & \textbf{Reward} & \textbf{Beam Search} & \textbf{Best-of-N} & \textbf{DoRA} & \textbf{DVTS} & \textbf{REBASE} & \textbf{H-UATS} & \textbf{A-UATS} & \textbf{Beam Search} & \textbf{Best-of-N} & \textbf{DoRA} & \textbf{DVTS} & \textbf{REBASE} & \textbf{H-UATS} & \textbf{A-UATS} \\
\midrule
Llama3.1-Instruct-8B & Math-Shepherd-PRM-7B & 14.71 & 15.05 & 16.67 & 16.19 & 15.71 & 19.62 & \textbf{21.52} & 55.38 & 60.95 & 62.67 & 55.59 & 55.45 & 64.48 & \textbf{66.64} \\
Llama3.1-Instruct-8B & Qwen2.5-Math-PRM-7B & 15.05 & 16.86 & 20.95 & 19.81 & 17.43 & 22.95 & \textbf{24.67} & 68.72 & 66.12 & 71.19 & 70.87 & 69.11 & 73.14 & \textbf{75.88} \\
Llama3.1-Instruct-8B & Skywork-PRM-1.5B & 15.14 & 16.95 & 18.57 & 20.19 & 19.53 & 22.76 & \textbf{24.67} & 67.12 & 65.95 & 70.46 & 69.26 & 67.66 & 72.43 & \textbf{73.96} \\
Llama3.2-Instruct-1B & Math-Shepherd-PRM-7B & 2.19 & 3.43 & 4.57 & 2.76 & 3.05 & 6.48 & \textbf{8.19} & 37.45 & 41.48 & 39.58 & 37.52 & 38.40 & 44.15 & \textbf{46.80} \\
Llama3.2-Instruct-1B & Qwen2.5-Math-PRM-7B & 4.48 & 3.05 & 5.90 & 2.67 & 3.90 & 7.71 & \textbf{9.24} & 50.57 & 45.60 & 55.12 & 52.81 & 51.77 & 56.57 & \textbf{58.19} \\
Llama3.2-Instruct-1B & Skywork-PRM-1.5B & 5.05 & 3.33 & 5.81 & 3.43 & 3.81 & 7.52 & \textbf{8.76} & 49.74 & 44.05 & 55.17 & 52.47 & 50.83 & 56.27 & \textbf{58.05} \\
Qwen2.5-Instruct-0.5B & Math-Shepherd-PRM-7B & 1.43 & 0.86 & 2.00 & 0.95 & 1.71 & 3.81 & \textbf{4.67} & 46.33 & 44.93 & 51.30 & 48.87 & 47.82 & 52.57 & \textbf{54.46} \\
Qwen2.5-Instruct-0.5B & Qwen2.5-Math-PRM-7B & 3.05 & 0.00 & 3.81 & 1.52 & 3.14 & 4.76 & \textbf{5.43} & 58.74 & 53.09 & 63.29 & 61.77 & 60.04 & 65.40 & \textbf{67.26} \\
Qwen2.5-Instruct-0.5B & Skywork-PRM-1.5B & 3.14 & 0.19 & 4.19 & 2.05 & 2.67 & 5.05 & \textbf{5.52} & 60.45 & 53.53 & 65.09 & 63.71 & 61.52 & 65.94 & \textbf{67.64} \\
Qwen2.5-Instruct-3B & Math-Shepherd-PRM-7B & 6.95 & 15.52 & 18.47 & 17.14 & 17.24 & 20.38 & \textbf{22.00} & 69.88 & 77.12 & 74.59 & 72.19 & 71.25 & 78.70 & \textbf{80.71} \\
Qwen2.5-Instruct-3B & Qwen2.5-Math-PRM-7B & 11.62 & 14.57 & 16.47 & 17.43 & 17.43 & 20.57 & \textbf{22.00} & 81.69 & 79.29 & 86.33 & 84.27 & 82.79 & 85.63 & \textbf{87.66} \\
Qwen2.5-Instruct-3B & Skywork-PRM-1.5B & 12.95 & 13.81 & 18.28 & 15.05 & 15.71 & 20.00 & \textbf{21.71} & 78.69 & 76.76 & 83.39 & 81.56 & 79.96 & 83.40 & \textbf{84.76} \\
Qwen2.5-Instruct-7B & Math-Shepherd-PRM-7B & 20.02 & 18.48 & 20.29 & 21.05 & 20.38 & 24.29 & \textbf{26.19} & 77.88 & 82.71 & 80.38 & 78.19 & 79.24 & 84.67 & \textbf{86.54} \\
Qwen2.5-Instruct-7B & Qwen2.5-Math-PRM-7B & 22.00 & 15.90 & 24.43 & 21.71 & 19.52 & 26.19 & \textbf{28.10} & 84.09 & 84.57 & 86.72 & 85.29 & 84.83 & 87.72 & \textbf{88.80} \\
Qwen2.5-Instruct-7B & Skywork-PRM-1.5B & 20.67 & 21.52 & 23.33 & 22.95 & 22.00 & 26.00 & \textbf{27.90} & 82.07 & 84.93 & 86.98 & 83.73 & 85.08 & 87.57 & \textbf{88.55} \\
\bottomrule
\end{tabular}
}
\end{table*}

Table~\ref{tab:results_summary} summarizes the average performance across all 15 model combinations (averaged over all budget levels $N \in \{4, \dots, 256\}$). On the challenging AIME benchmark, H-UATS achieves an average accuracy of 15.87\%, surpassing the strongest baseline (DoRA, 13.58\%) by a margin of 2.29\%. A-UATS further elevates this performance to 17.37\%, representing a net improvement of 3.79\% over DoRA and 1.50\% over H-UATS. A similar trend is observed on the MATH-500 dataset, where H-UATS (70.58\%) and A-UATS (73.73\%) consistently outperform the best baseline (DoRA, 68.81\%). Specifically, A-UATS demonstrates a substantial 4.92\% average gain over DoRA. These aggregated results strongly validate our core hypothesis: explicitly modeling uncertainty (H-UATS) provides a robust advantage over point-estimate methods, while learning a dynamic allocation policy (A-UATS) further maximizes the utility of the available compute budget across diverse reasoning scenarios.

\section{Additional Ablation Studies}
\label{app:abl}
\subsection{The Role of Uncertainty Estimation in H-UATS and A-UATS}

To investigate the impact of Uncertainty Estimation in the UATS framework, we conduct a comprehensive ablation study comparing model performance with and without the uncertainty estimation mechanism. To ensure the robustness of our findings, we evaluate this impact across diverse combinations of Policy Models and Process Reward Models. Specifically, we test combinations involving Qwen2.5-Instruct-7B and LLaMA3.1-Instruct-8B as policies, paired with Qwen2.5-Math-PRM-7B and Math-Shepherd-PRM-7B. We introduce two ablated baselines:
\begin{itemize}
    \item \textbf{H-UATS w/o Uncert.}: This variant removes the Monte Carlo Dropout mechanism from the heuristic search. Instead of using uncertainty-guided re-evaluation, it relies solely on the single-pass PRM score for budget allocation. This setup is methodologically similar to REBASE \cite{wuInferenceScalingLaws2024}, but differs in specific hyperparameter settings.
    \item \textbf{A-UATS w/o Uncert.}: This variant trains the adaptive controller without access to uncertainty-related state features (i.e., $\mu_{\sigma_R}$ and $\max \sigma_R$ are removed from the state space $s_t$). The controller must make budget allocation decisions based solely on the distribution of raw PRM scores and the remaining budget.
\end{itemize}

\begin{table}[ht]
    \centering
    \caption{Ablation study evaluating the impact of Uncertainty Estimation across different Policy and PRM combinations. We report accuracy (\%) on MATH-500 and AIME24.}
    \label{tab:abl_uncert}
    \begin{tabular}{cclcc}
        \toprule
        \textbf{Policy Model} & \textbf{PRM} & \textbf{Method} & \textbf{MATH-500} & \textbf{AIME24} \\
        \midrule
        \multirow{4}{*}{Qwen2.5-7B-Instruct} & \multirow{4}{*}{Qwen2.5-Math-PRM-7B} 
             & H-UATS w/o Uncert. & 85.62 & 18.82 \\
             & & H-UATS             & \textbf{87.72} & \textbf{26.19} \\
             & & A-UATS w/o Uncert. & 85.52 & 22.47 \\
             & & A-UATS             & \textbf{88.80} & \textbf{28.10} \\
        \midrule
        \multirow{4}{*}{Qwen2.5-7B-Instruct} & \multirow{4}{*}{Math-Shepherd-PRM-7B} 
             & H-UATS w/o Uncert. & 80.42 & 21.46 \\
             & & H-UATS             & \textbf{84.67} & \textbf{24.29} \\
             & & A-UATS w/o Uncert. & 82.16 & 21.42 \\
             & & A-UATS             & \textbf{86.54} & \textbf{26.19} \\
        \midrule
        \multirow{4}{*}{LLaMA3.1-8B-Instruct} & \multirow{4}{*}{Math-Shepherd-PRM-7B} 
             & H-UATS w/o Uncert. & 57.72 & 16.01 \\
             & & H-UATS             & \textbf{64.48} & \textbf{19.62} \\
             & & A-UATS w/o Uncert. & 61.02 & 16.67 \\
             & & A-UATS             & \textbf{66.64} & \textbf{21.52} \\
        \bottomrule
    \end{tabular}
\end{table}

The results in Table \ref{tab:abl_uncert} consistently demonstrate the critical role of uncertainty estimation. In all three settings, both H-UATS and A-UATS significantly outperform their uncertainty-agnostic counterparts. For instance, in the heterogeneous setting (LLaMA3.1-8B + Math-Shepherd), incorporating uncertainty into H-UATS yields a substantial gain of +6.76\% on MATH-500. This confirms our hypothesis that uncertainty estimates act as a vital filter for OOD errors.

Notably, A-UATS w/o Uncert. generally outperforms H-UATS w/o Uncert. even without explicit uncertainty features. This suggests that the learned controller can implicitly infer some notion of "difficulty" or "reliability" from the distribution of raw scores alone and optimize the budget allocation better than fixed heuristic rules. However, the full A-UATS method achieves the best performance across the board, indicating that explicit epistemic uncertainty provides a unique and irreplaceable signal for robust decision-making that cannot be fully recovered from point-estimate scores alone.

\begin{figure}[tb]
    \centering
    \begin{subfigure}[t]{0.48\linewidth}
        \centering
        \includegraphics[width=\linewidth]{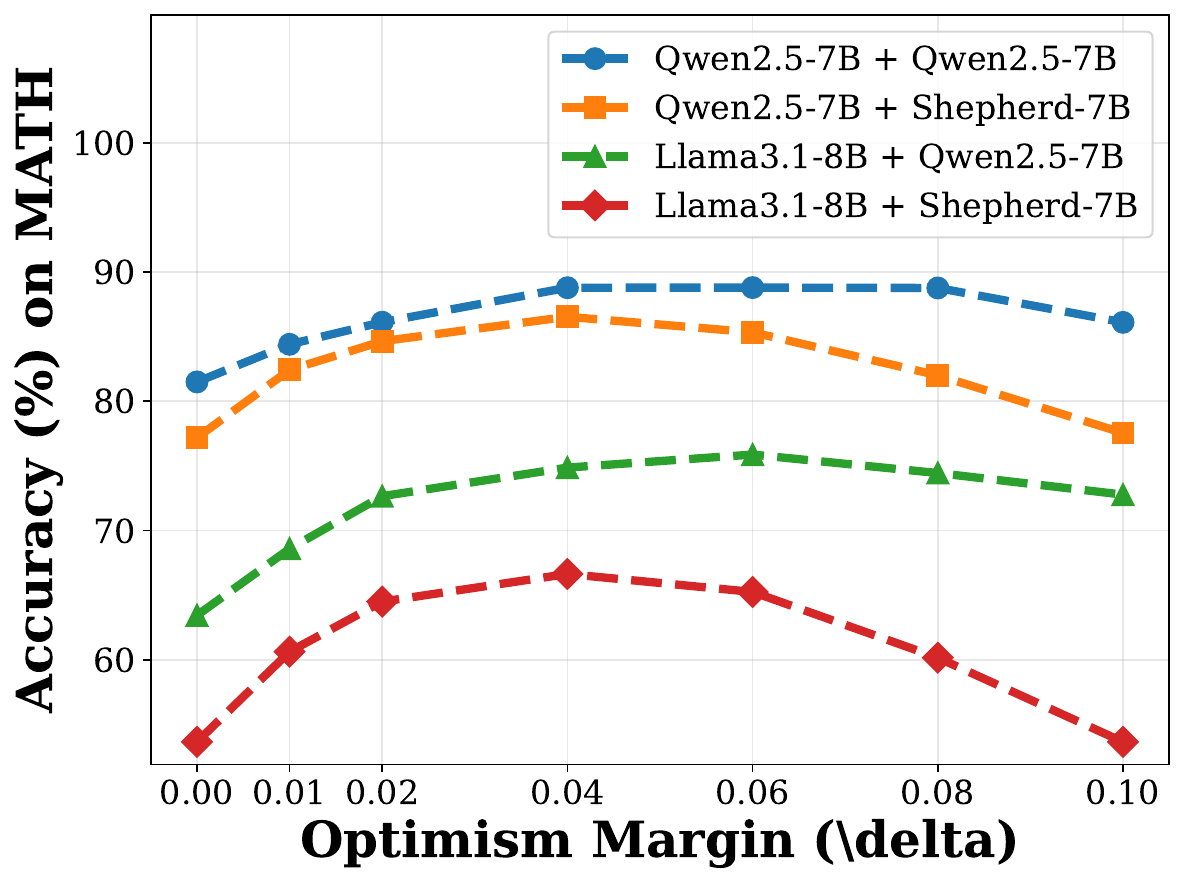}
        \caption{Optimism Margin $\delta$}
        \label{fig:ablation_delta}
    \end{subfigure}
    \hfill
    \begin{subfigure}[t]{0.48\linewidth}
        \centering
    \includegraphics[width=\linewidth]{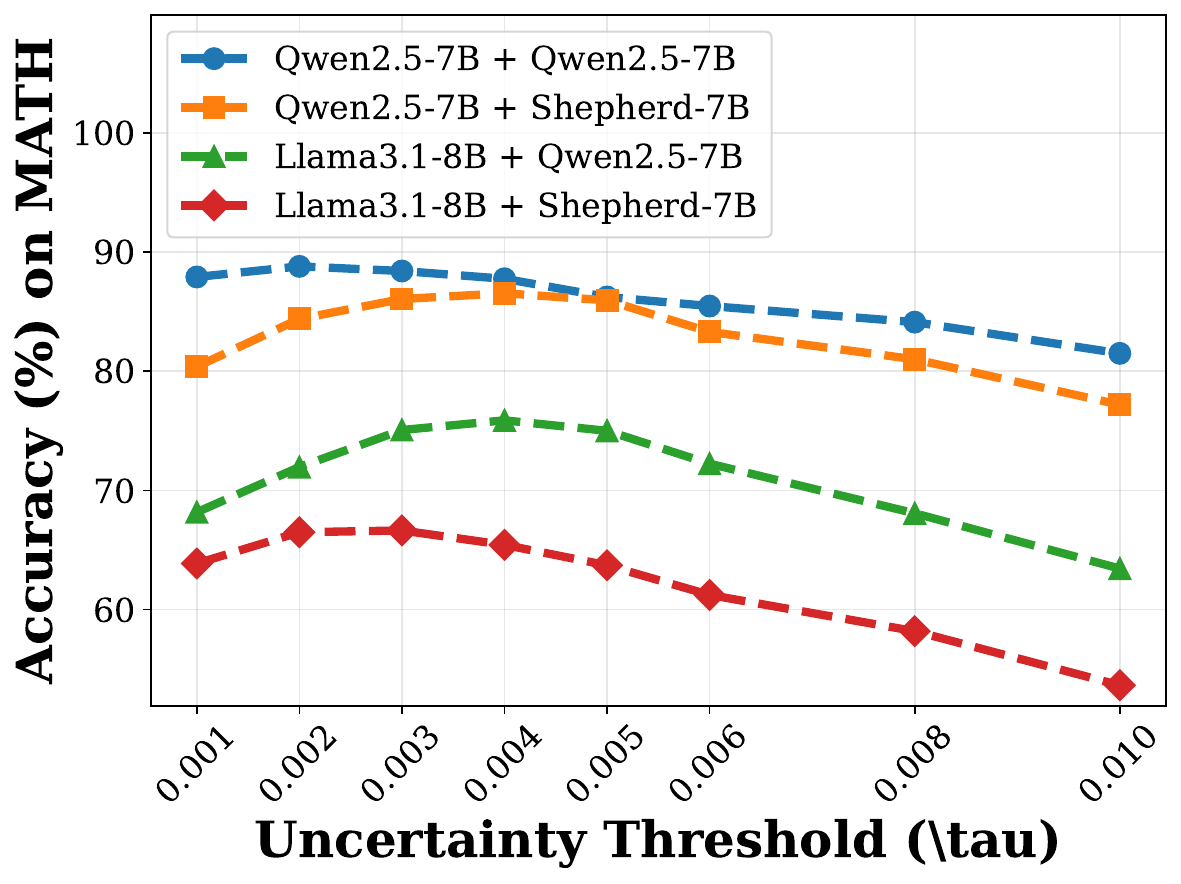}
        \caption{Uncertainty Threshold $\tau$}
        \label{fig:ablation_tau}
    \end{subfigure}
    \caption{Ablation studies for $\tau$ and $\delta$.}
    \label{fig:abl_app}
\end{figure}

\subsection{Ablation of Uncertainty Threshold}
The uncertainty threshold $\tau$ plays a pivotal role in distinguishing between ID and OOD reasoning steps. It acts as a gatekeeper, determining which candidates warrant expensive re-evaluation. We analyze the sensitivity of H-UATS performance to varying $\tau$. Empirically, setting $\tau$ too low results in an overly aggressive classification, where even stable ID steps are flagged as uncertain, causing the system to waste its limited evaluation budget on unnecessary verifications. Conversely, a high $\tau$ fails to capture genuine epistemic uncertainties, allowing hallucinated or high-error OOD steps to propagate through the search tree. Our results, visualized in Figure \ref{fig:ablation_tau}, indicate that an optimal trade-off is achieved around $\tau \approx 0.003$, where the system effectively filters high-risk nodes while maintaining search efficiency.

\subsection{Ablation of Optimism Margin}
The optimism margin $\delta$ defines the degree of leniency extended to potentially underestimated candidates during the filtering phase. It allows the system to rescue steps that have low initial PRM scores but high uncertainty. Increasing $\delta$ encourages exploration by retaining more uncertain candidates for re-evaluation. Figure \ref{fig:ablation_delta} shows that excessive optimism (large $\delta$) introduces significant noise, as the system begins to invest resources in truly incorrect steps. The performance curve suggests a optimal value when $\delta \approx 0.04$.

\subsection{Ablation of Network Structure}
In this section, we investigate how different architectural choices for the policy networks affect the performance of A-UATS. We perform an ablation study on the MATH-500 dataset using Qwen2.5-Instruct-7B as the policy model and Math-Shepherd-PRM-7B as PRM. For each configuration, we report the average accuracy across all reward models and compute budgets. Specifically, we vary the number of layers and hidden dimensions of both networks to assess their impact on overall performance.
\begin{table}[ht]
\centering
\caption{Mean accuracy of A-UATS on MATH-500 under different network architectures. Each result is averaged over all compute budgets.}
\label{tab:actor-critic-ablation}
\begin{tabular}{c|c|c}
\toprule
\textbf{Network Layers} & \textbf{Network Dim} & \textbf{Accuracy (\%)} \\
\midrule
2 & 128 & 84.73 \\
2 & 256 &\bf 86.54 \\
2 & 512 & 85.12 \\
3 & 128 & 86.32 \\
3 & 256 & 85.61 \\
3 & 512 & 85.03 \\
\bottomrule
\end{tabular}
\end{table}

\section{Qualitative Results}
\label{app:quali}
To provide deeper insights into the behavior of the PRM and the utility of epistemic uncertainty, we analyze four representative reasoning steps encountered during the standard beam search process. These examples, visualized in Figures \ref{fig:high_right} to \ref{fig:low_wrong}, illustrate how uncertainty estimates serve as a critical calibration signal, enabling the system to distinguish between reliable judgments and potential distribution shifts.

\textbf{1. High Score, Low Uncertainty (Figure \ref{fig:high_right}).} 
This figure represents the ideal scenario where the policy model generates a correct, high-quality reasoning step that aligns well with the PRM's training distribution. The PRM confidently assigns a high score with low variance. In this case, both standard and uncertainty-aware methods correctly prioritize this step for extension.

\textbf{2. High Score, High Uncertainty (Figure \ref{fig:high_wrong}).}
Here, the step is actually incorrect, yet the PRM assigns it a high score, likely due to "hallucinated" correctness or superficial plausibility (e.g., correct-looking LaTeX formatting but flawed logic). Crucially, the \textit{high epistemic uncertainty} reveals the PRM's lack of confidence in this OOD input. While a standard greedy search would be misled into selecting this step, our UATS framework uses the high variance signal to downweight its priority or trigger additional sampling, effectively filtering out this potential trap.

\textbf{3. Low Score, High Uncertainty (Figure \ref{fig:low_right}).}
This is the most critical scenario demonstrating the value of our method. The step is factually correct but phrased in an unconventional way (OOD), causing the PRM to assign a low score. However, the associated \textit{high uncertainty} indicates that the PRM is unreliable in this region. Unlike standard methods that would prematurely prune this valid path, UATS identifies the high uncertainty and allocates additional compute budget (or applies an optimism bonus), successfully "rescuing" the correct path and allowing the reasoning to proceed to the correct solution.

\textbf{4. Low Score, Low Uncertainty (Figure \ref{fig:low_wrong}).}
This figure shows a clear reasoning error. The PRM correctly identifies it as incorrect with a low score, and the low uncertainty confirms this judgment is robust (in-distribution error). Both baseline and uncertainty-aware methods efficiently prune this step, conserving computational resources for more promising branches.

\begin{figure}
    \centering
    \includegraphics[width=0.6\linewidth]{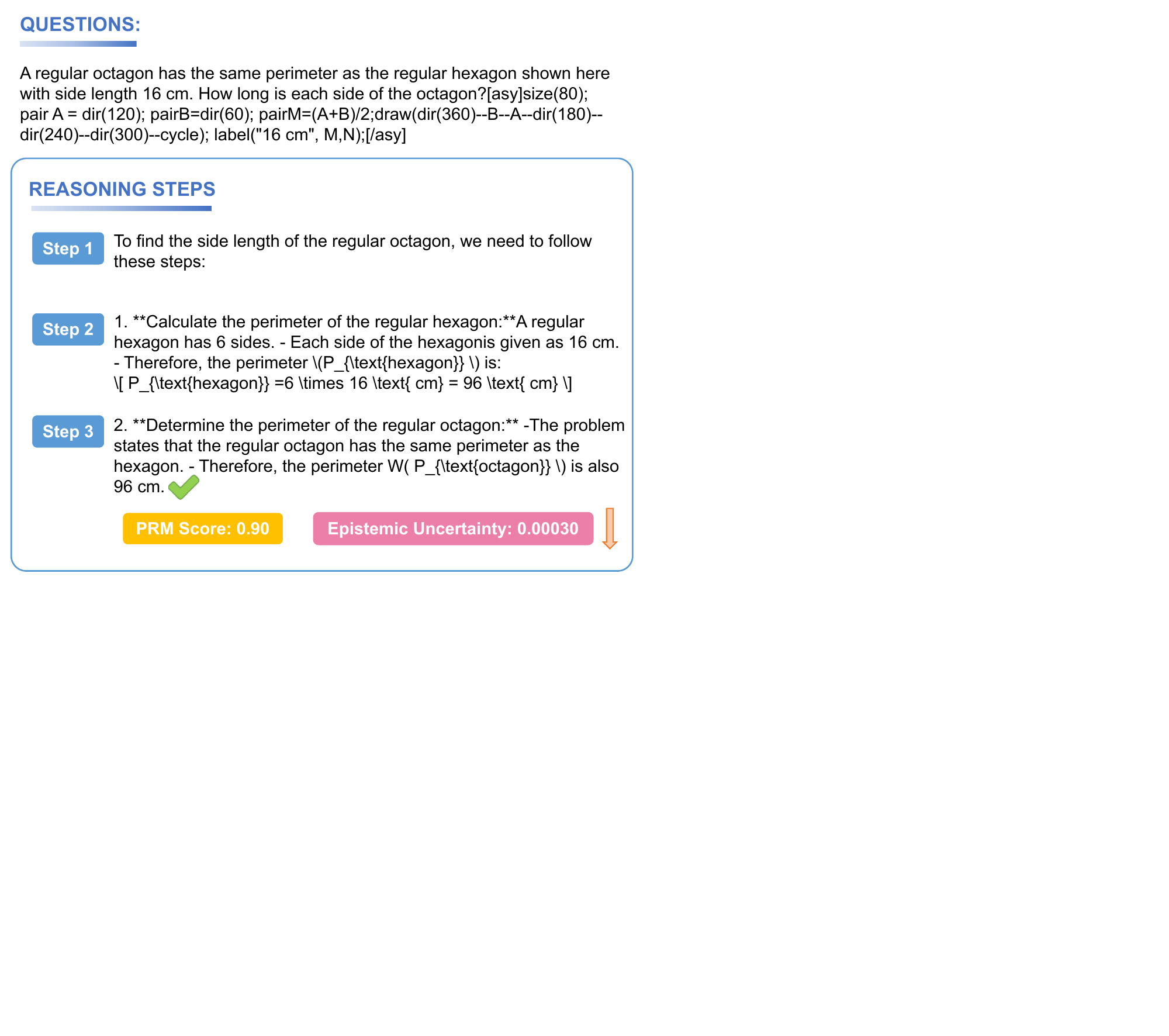}
    \caption{Reasoning step with high PRM score but low epistemic uncertainty.}
    \label{fig:high_right}
\end{figure}

\begin{figure}
    \centering
    \includegraphics[width=0.6\linewidth]{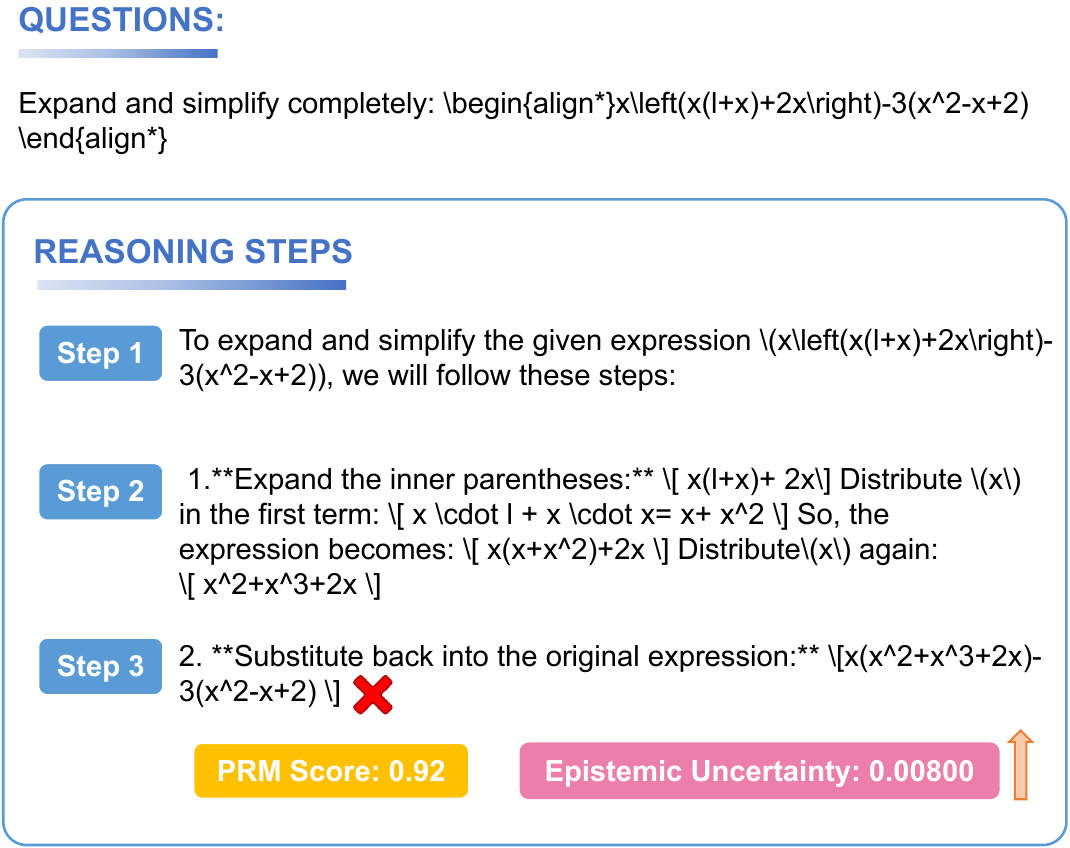}
    \caption{Reasoning step with high PRM score and high epistemic uncertainty.}
    \label{fig:high_wrong}
\end{figure}

\begin{figure}
    \centering
    \includegraphics[width=0.6\linewidth]{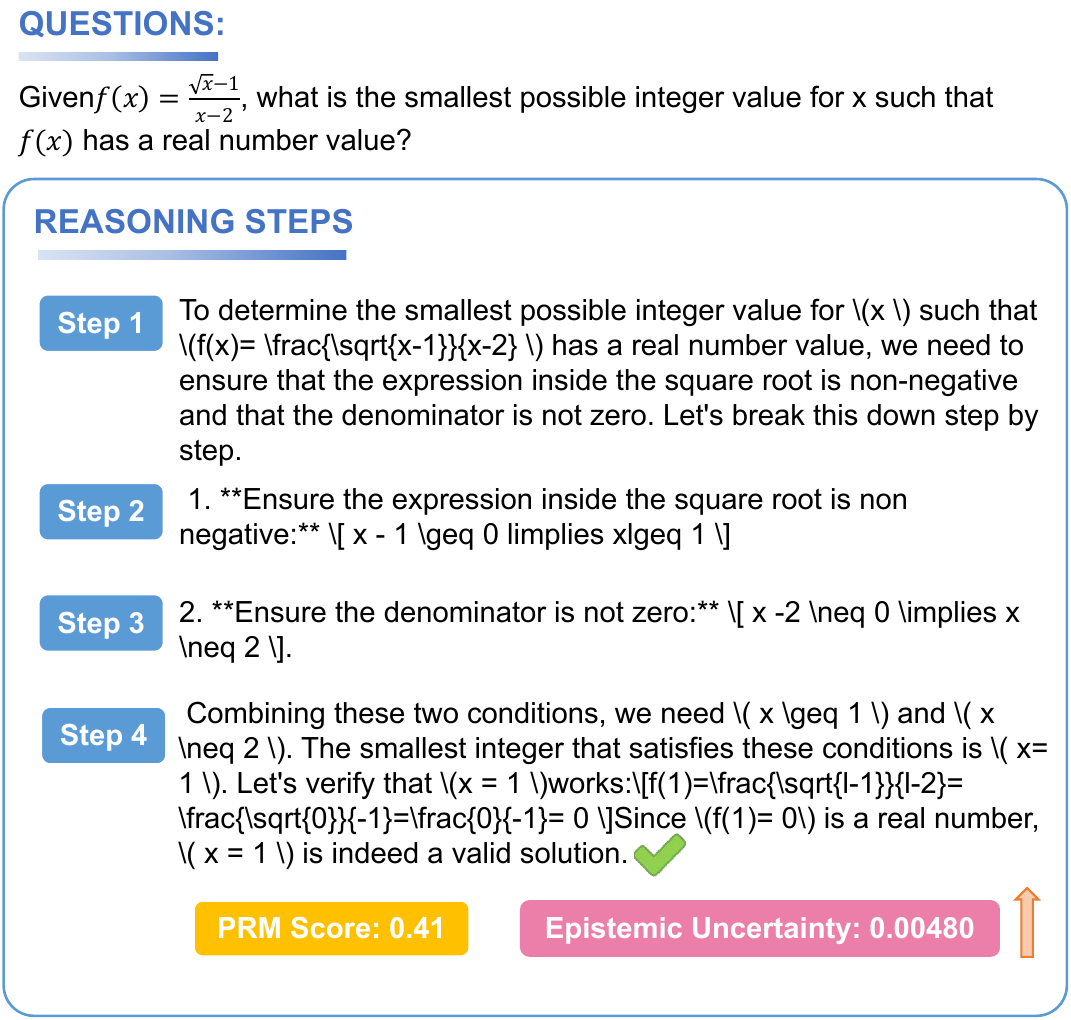}
    \caption{Reasoning step with low PRM score but high epistemic uncertainty.}
    \label{fig:low_right}
\end{figure}

\begin{figure}
    \centering
    \includegraphics[width=0.6\linewidth]{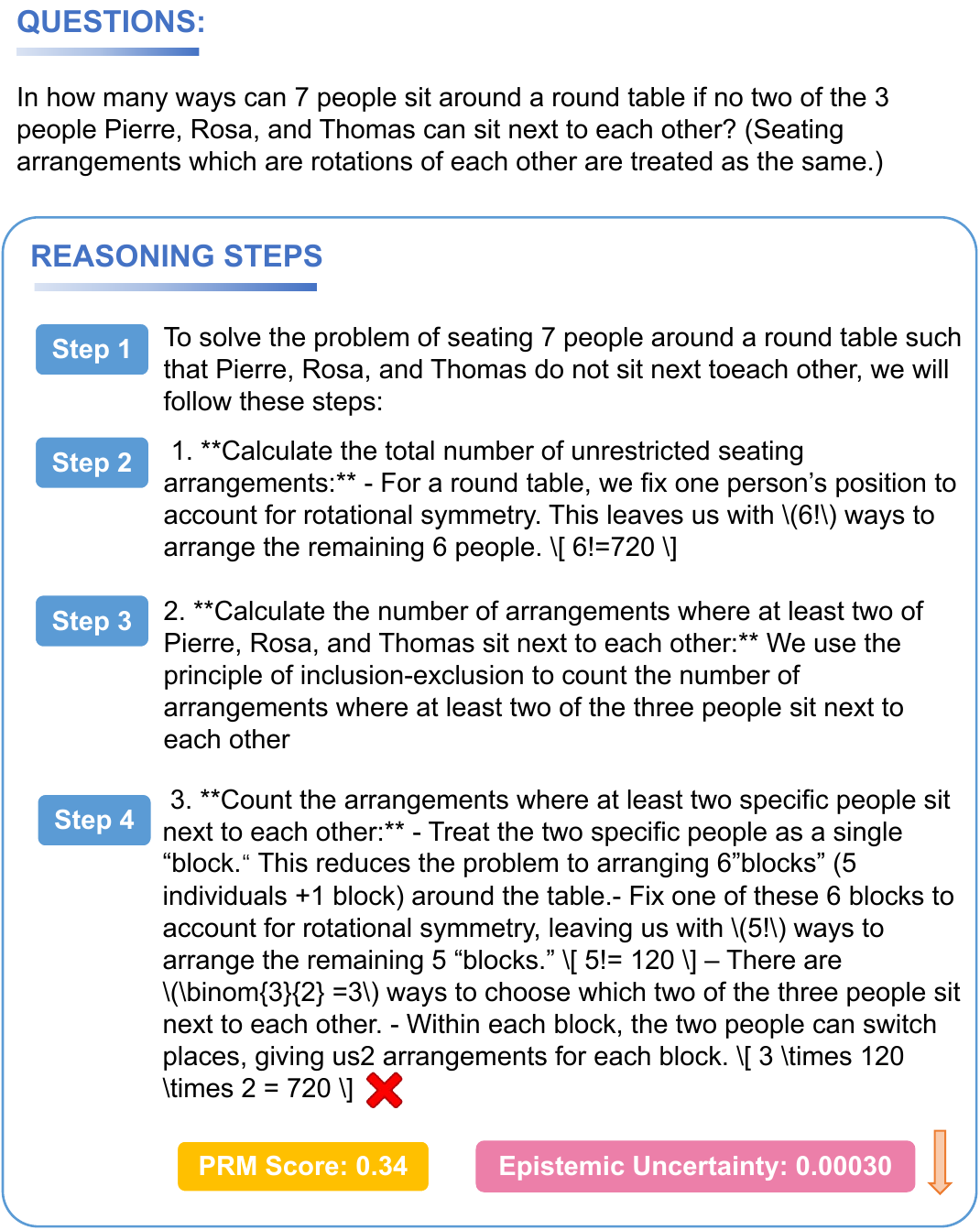}
    \caption{Reasoning step with low PRM score and also low epistemic uncertainty.}
    \label{fig:low_wrong}
\end{figure}

\end{document}